  \providecommand\BibTeX{{%
    \normalfont B\kern-0.5em{\scshape i\kern-0.25em b}\kern-0.8em\TeX}}}
\begin{document}

%%
%% The "title" command has an optional parameter,
%% allowing the author to define a "short title" to be used in page headers.
\title{Recoverable Privacy-Preserving Image Classification through Noise-like Adversarial Examples}

%%
%% The "author" command and its associated commands are used to define
%% the authors and their affiliations.
%% Of note is the shared affiliation of the first two authors, and the
%% "authornote" and "authornotemark" commands
%% used to denote shared contribution to the research.
\author{Jun Liu}
\email{yc07453@umac.mo}
%\orcid{0000-0003-1167-5727}
\affiliation{%
  \institution{Stake Key Laboratory of Internet of Things for Smart City, 
 Department of Computer and Information Science, Faculty of Science and Technology, University of Macau}
  %\streetaddress{P.O. Box 1212}
  \city{Macau}
  %\state{Ohio}
  \country{China}}
  %\postcode{43017-6221}

\author{Jiantao Zhou}
\authornote{Corresponding author}

\affiliation{%
  \institution{Stake Key Laboratory of Internet of Things for Smart City 
 Department of Computer and Information Science, Faculty of Science and Technology, University of Macau}
  %\streetaddress{P.O. Box 1212}
  \city{Macau}
  %\state{Ohio}
  \country{China}}
  %\postcode{43017-6221}
  \email{jtzhou@umac.mo}

\author{Jinyu Tian}
\affiliation{%
  \institution{Faculty of Innovation Engineering, Macau University of Science and Technology}
  %\streetaddress{1 Th{\o}rv{\"a}ld Circle}
  \city{Macau}
  \country{China}}
\email{jytian@must.edu.mo}

\author{Weiwei Sun}
\affiliation{%
  \institution{Alibaba Group}
  %\streetaddress{1 Th{\o}rv{\"a}ld Circle}
  \city{Hangzhou}
  \country{China}}
\email{sunweiwei.sww@alibaba-inc.com}

%%
%% By default, the full list of authors will be used in the page
%% headers. Often, this list is too long, and will overlap
%% other information printed in the page headers. This command allows
%% the author to define a more concise list
%% of authors' names for this purpose.
\renewcommand{\shortauthors}{Jun L et al.}
\newcommand\Tstrut{\rule{0pt}{2.6ex}}

%%
%% The abstract is a short summary of the work to be presented in the
%% article.
\begin{abstract}
 
 With the increasing prevalence of cloud computing platforms, ensuring data privacy during the cloud-based image related services such as classification has become crucial. In this study, we propose a novel privacy-preserving image classification scheme that enables the direct application of classifiers trained in the plaintext domain to classify encrypted images, without the need of retraining a dedicated classifier. Moreover, encrypted images can be decrypted back into their original form with high fidelity (recoverable) using a secret key. Specifically, our proposed scheme involves utilizing a feature extractor and an encoder to mask the plaintext image through a newly designed Noise-like Adversarial Example (NAE). Such an NAE not only introduces a noise-like visual appearance to the encrypted image but also compels the target classifier to predict the ciphertext as the same label as the original plaintext image. At the decoding phase, we adopt a Symmetric Residual Learning (SRL) framework for restoring the plaintext image with minimal degradation. Extensive experiments demonstrate that 1) the classification accuracy of the classifier trained in the plaintext domain remains the same in both the ciphertext and plaintext domains; 2) the encrypted images can be recovered into their original form with an average PSNR of up to 51+ dB for the SVHN dataset and 48+ dB for the VGGFace2 dataset; 3) our system exhibits satisfactory generalization capability on the encryption, decryption and classification tasks across datasets that are different from the training one; and 4) a high-level of security is achieved against three potential threat models. The code is available at \url{https://github.com/csjunjun/RIC.git}.  
\end{abstract}

\begin{CCSXML}
<ccs2012>
   
       <concept_id>10002978.10002991</concept_id>
       <concept_desc>Security and privacy~Security services</concept_desc>
       <concept_significance>500</concept_significance>
       </concept>
       <concept>
       <concept_id>10010147.10010178.10010224.10010245</concept_id>
       <concept_desc>Computing methodologies~Computer vision problems</concept_desc>
       <concept_significance>500</concept_significance>
       </concept>
   <concept>
 </ccs2012>
\end{CCSXML}

\ccsdesc[500]{Security and privacy~Security services}
\ccsdesc[500]{Computing methodologies~Computer vision problems}

\keywords{privacy-preserving, image classification, encryption, deep neural networks}

%%
%% This command processes the author and affiliation and title
%% information and builds the first part of the formatted document.
\maketitle

\section{Introduction}\label{sec:intro}
                 
 In recent years, machine learning techniques have found widespread applications in various image processing tasks, including classification, segmentation, denoising, etc. \cite{liu2022convnet,10.1145/3578518,10.1145/3231742,10.1145/3446618,Neshatavar2022CVFSIDCM,10.1145/3566125}. Concurrently, the emergence of Machine Learning as a Service (MLaaS) has introduced a new service paradigm, wherein machine learning services are provided to clients through cloud infrastructures \cite{7424435,9384314}. However, the use of MLaaS raises concerns about the data privacy, as client data uploaded to the cloud may contain sensitive information. To address this issue, various Privacy-Preserving Machine Learning (PPML) solutions have been proposed, which aim to perform machine learning tasks without exposing the original private data \cite{8677282,Mohassel2017SecureML,9757847}. Over the past decade, many PPML techniques have been developed to mitigate privacy risks in image classification, including Homomorphic Encryption (HE) \cite{gentry2009fully,gilad2016cryptonets,sanyal2018tapas,9920289}, Multi-Party Computation (MPC) \cite{yao1982protocols,liu2017oblivious,juvekar2018gazelle,bian2020ensei,10089424}, differential privacy \cite{dwork2008differential,abadi2016deep,ji2022privacy}, frequency domain transformations \cite{bian2020ensei,mi2022duetface}, and federated learning \cite{mcmahan2017communication,yang2019federated,10.1145/3426474,10.1145/3537899}, etc.

In the context of privacy-preserving image classification, traditional PPML methods often rely on specially trained classifiers that come with many limitations. For instance, CryptoNets \cite{gilad2016cryptonets} can only classify a certain type of encrypted data; but would fail for plaintext ones. These methods modify components in a standard Convolutional Neural Network (CNN) that are not compatible with HE, such as max pooling and sigmoid activation functions, and then retrain the classifier in the ciphertext domain.  Similarly, TAPAS \cite{sanyal2018tapas} proposes a method to adapt a classifier to encrypted images; but only applicable to binary neural networks \cite{hubara2016binarized}. It should be noted that these specially trained classifiers often have much inferior classification accuracy, compared to the ones trained in the plaintext domain. There are some other drawbacks of traditional PPML methods such as the large overhead of retraining classifiers for each different dataset, difficulty in keeping up with the advancements of state-of-the-art (SOTA) classifiers, high communication costs associated with MPC \cite{shamsabadi2020privedge}, etc.

In this work, we propose a \textbf{R}ecoverable privacy-preserving \textbf{I}mage \textbf{C}lassification (RIC) system, aiming to alleviate the above limitations from three perspectives: 1) enable any plaintext-domain classifiers to classify encrypted images, without the need of retraining dedicated classifiers and sacrificing classification accuracy; 2) allow high-fidelity recovery (recoverable) of the original plaintext image by using a secret key; and 3) possess desired generalization capability to unknown datasets. The application scenarios of our proposed RIC are illustrated in Fig.\ref{fig:story}. For instance, consider a government department that needs to store a large number of driving license images in the cloud. Since these images contain sensitive information such as users' photos and identification numbers, it is imperative that they are stored in an encrypted format. Meanwhile, for various purposes, e.g., efficient management of image files, it is much desirable to perform image classification over these encrypted images. Additionally, the government department should have the capability to retrieve the encrypted images and decrypt them into their original form with high fidelity, with the assistance of a secret key. Another example is regarding the face images captured by surveillance cameras and stored in the cloud. These images, privacy-sensitive in nature, play a vital role in many critical situations, e.g., tracking the appearance of criminals. Again, classification has to be carried out in the encrypted domain, as face images are encrypted and stored in the cloud.  Moreover, in this application, for confirming the identity of criminals, the ciphertext needs to be decrypted with a small degradation.           

In our RIC system, there are three parties involved: data providers, the cloud, and authorized users, as shown in Fig.\ref{fig:story}. Specifically, data providers upload encrypted images to the cloud using the encryption functionality provided by the RIC system. The cloud can then perform classification directly on the encrypted data using classifiers trained in the plaintext domain. Also, authorized users can decrypt the encrypted images using a secret key shared through a private channel. Technically, our proposed RIC system consists of three main components: an encoder, a decoder, and a classifier.  The encoder and decoder need to be trained to perform the encryption and decryption respectively, while the classifier is pre-trained in the plaintext-domain serving as an assisting module. In the process of encryption, the encoder is used to conceal a plaintext image by applying our carefully designed Noise-like Adversarial Example (NAE). The NAE is generated by optimizing a Random Noisy Image (RNI) in such a way that it is classified as the same class label as the given plaintext image. The image encoded with an NAE can then be directly classified by the given plaintext-domain classifier, without requiring a dedicated re-trained classifier. Such a flexibility allows for an easy integration with powerful SOTA classifiers. During the decryption (recovery), the decoder uses the same NAE to reconstruct the plaintext image with a high fidelity, through a Symmetric Residual Learning (SRL) architecture.

The contributions of this work are summarized as follows:
 \begin{itemize}
 		\item We propose a novel system for privacy-preserving image classification in the cloud. This system does not require to re-train a dedicated classifier in the encrypted domain and does not compromise the classification accuracy. Such a property allows to incorporate any existing powerful classifiers trained in the plaintext domain.

		\item We design a secure encoding-decoding mechanism in RIC for protecting the privacy of images and recovering them faithfully with a secret key, through our proposed NAE and SRL architecture. 
		
		\item Extensive experiments demonstrate that RIC outperforms SOTA PPML methods in terms of classification accuracy and recovery fidelity. Furthermore, RIC shows good generalization capability across different datasets. It is also validated that a high-level of security against three potential threat models is achieved.

 \end{itemize}

The organization of the remaining paper is as follows. We first review related works in Section \ref{section:rewo} and then define the design goals, system model, and threat models in Section \ref{sec:sec3}. Subsequently, we give the details of the most crucial modules of our RIC in Section \ref{sec:sec4}. After that, extensive experimental results are provided in Section \ref{sec:sec5}, and the security analysis is presented in Section \ref{sec:sec6}. Finally, Section \ref{sec:sec7} concludes.

 \begin{center}
	\begin{figure}[t]		\centering{\includegraphics[width=0.8\textwidth]{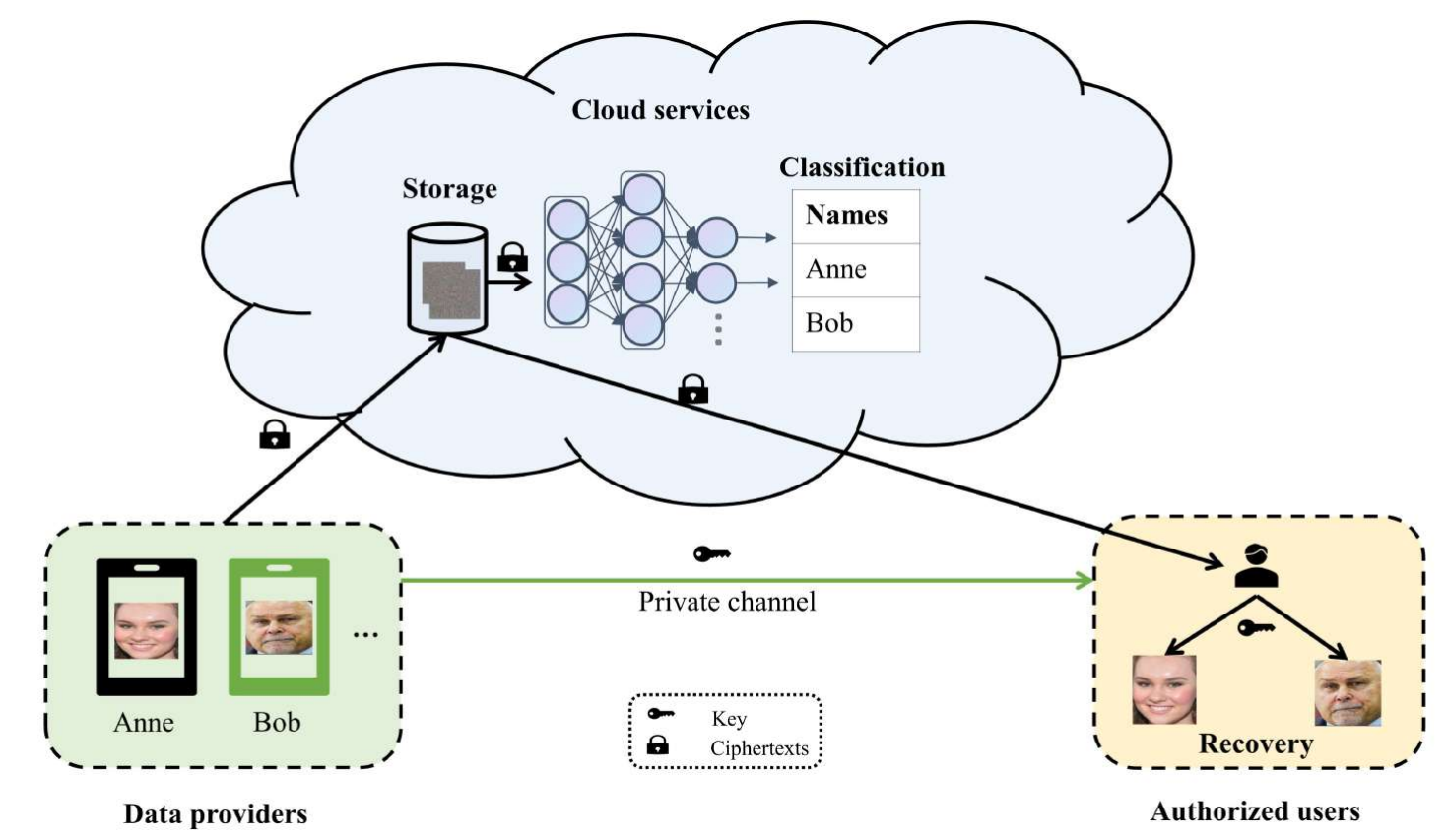}}
		\caption{Schematic diagram of our proposed recoverable privacy-preserving image classification system.}
		\label{fig:story}
	\end{figure}
\end{center}

	\section{Related Works}\label{section:rewo}
 In this section, we review the related works on privacy-preserving image classification and adversarial attack.

\subsection{Privacy-Preserving Image Classification}

Cryptography techniques, such as HE\cite{rivest1978data} and Secure MPC\cite{goldreich1998secure}, have been applied to address the privacy-preserving classification problem. Specifically, HE transforms a plaintext to a ciphertext that can be computed mathematically, enabling some computations in the encrypted domain. One notable method is TAPAS \cite{sanyal2018tapas}, which utilizes a fully HE scheme \cite{gentry2009fully} that supports operations on binary data. TAPAS employs specially designed binary neural networks to perform inference on encrypted data. However, HE-based methods often come with huge computational overhead due to the homomorphic computations involved.   Additionally, they may suffer from the accuracy degradation because of the approximated functions and binarized parameters in the classifiers \cite{gilad2016cryptonets, sanyal2018tapas}. Another approach is taken by a branch of methods that employ Secure MPC and define interactive prediction protocols. Representatives of such methods include MiniONN \cite{liu2017oblivious}, Gazzle \cite{juvekar2018gazelle}, and ENSEI \cite{bian2020ensei}. Unfortunately, MPC-based methods still face various challenges, including the time-consuming training of dedicated classifiers \cite{shamsabadi2020privedge} and high communication overheads. For instance, MiniONN exhibits a communication cost of over 9 minutes and requires message sizes of 9,000 MB when performing inference on images from Cifar-10\cite{krizhevsky2009learning} dataset with a size of $32\times32\times3$. Again, using oblivious approximated network components inevitably leads to accuracy drops, which sometimes are significant. 

Another line of research involves utilizing features extracted by CNNs as encrypted images, without explicitly relying on cryptography. However, these methods still require uniquely designed and trained classifiers to classify the encrypted data. Methods such as DAO\cite{dong2017dropping}, Hybrid\cite{osia2020hybrid}, FEAT\cite{ding2020privacy} and ARDEN\cite{wang2018not} assign initial layers of certain CNNs to the users' side, while transmitting output features as the encrypted data to the cloud server side for the inference. Some other methods, e.g. DPFE\cite{osia2020deep} and Cloak\cite{mireshghallah2021not}, intentionally decrease the mutual information between the extracted feature, i.e., the ciphertext, and the plaintext to achieve the privacy-preserving ability. Furthermore, InstaHide\cite{huang2020instahide} encrypts a plaintext image by combining it with a set of randomly chosen images and applying a random pixel-wise mask. Subsequently, a classifier is trained on this processed dataset. It is worth mentioning that InstaHide could achieve a satisfactory level of privacy only if the additional dataset used for mixing the original dataset is sufficiently diverse and private.      
                 
	\subsection{Adversarial Attacks against Classifiers }
Adversarial attacks against classifiers aim to generate Adversarial Examples (AE) that deceive the classifiers into predicting incorrect labels. These attacks can be categorized into targeted and non-targeted, based on whether the incorrect label is pre-specified. Also, attack methods can be divided into white-box, black-box, or even grey-box, depending on the knowledge accessible to the attacker. Here, we focus on white-box adversarial attacks, which are most relevant to the current work. Among white-box approaches, there are two main streams: gradient projection methods and Lagrange-form methods, along with a few other attack techniques.     

Specifically, Gradient projection attacks, such as Fast Gradient Sign Method (FGSM)\cite{goodfellow2014explaining}, Projected Gradient Descent (PGD)\cite{madry2017towards}, and Momentum Iterative
Fast Gradient Sign Method (MIFGSM)\cite{dong2018boosting}, endeavor to seek a feasible AE by minimizing a Cross-Entropy (CE) loss function under a distortion budget. The gradient descent with respect to this CE loss is used for searching a local minimum, and projection operations control the perturbation distance in the Euclidean space. Alternatively, Lagrange-form attacks tackle the classification and the distortion simultaneously by introducing both of their constraints into the optimization objective function, potentially achieving a higher attack success rate and smaller adversarial perturbations. Among Lagrange-form attacks, C\&W \cite{carlini2017towards} is one of the most powerful methods, which proposes to integrate a margin loss for mis-classification in the optimization objective. Some other methods also attempt to find AEs from a geometric perspective. For example, DeepFool\cite{moosavi2016deepfool} calculates the normal vectors of the classification decision boundary so as to approach the AE with a minimal perturbation. Another work named Walking on the Edge (WE) \cite{zhang2020walking} optimizes the perturbation magnitude by iteratively projecting the gradient of the distortion on the tangent space of the manifold of the decision boundary.  

It should also be noted that these adversarial attacks typically initiate the optimization process from a natural image. To the best of our knowledge, there has been no exploration of the optimization from a purely random image to deceive the classifier. As will be clear soon, such generated NAEs would play an important role in our proposed privacy-preserving image classification approach.

\section{Design goals, System Model, and threat models}\label{sec:sec3}

This section presents an overview of our proposed privacy-preserving image classification system. We begin by outlining the design goals that guide the development of our system. Next, we describe the system model, which encompasses the key components and their interactions. We also discuss three potential threat models that could be used to assist the security evaluation.

\subsection{Design Goals}
Our proposed system aims to achieve the following design goals:
\begin{itemize}
\item\textbf{Accuracy}: The classification accuracy on the encrypted data would be as high as the one on original non-encrypted data resulting from a SOTA classifier trained in the plaintext domain.  
    \item\textbf{Confidentiality}: The meaningful information contained within the encrypted image should be inaccessible to adversaries. Only authorized parties would be able to recover the original image from the ciphertext with a secret key.
    \item\textbf{Generalization}: Once our system is well-trained on a particular dataset, it can be directly used for encrypting and decrypting other unknown datasets. The resulting ciphertexts can also be accurately classified by the corresponding classifiers trained in the plaintext domain.  
\end{itemize}
	
\subsection{System Model}\label{sec:systemmodel}
Upon defining the design goals, we construct our proposed RIC system as illustrated in Fig.\ref{fig:pics}. As mentioned earlier, the system involves three key participants: data providers, authorized users, and a cloud server. It also incorporates the following three important modules:
\begin{itemize}
      \item \textbf{Privacy-protection Module}: $\mathcal{E}(\mathbf{x_f},\mathbf{n_{adv}})\rightarrow\mathbf{x^\prime_q}$. The data provider employs a locally deployed encoder $\mathcal{E}$ to encrypt the feature $\mathbf{x_f}=\mathcal{F}(\mathbf{x})$ of a given plaintext image $\mathbf{x}$, where a feature extractor $\mathcal{F}$ is applied. This privacy-protection module involves the generation of an RNI $\mathbf{n}$ using a Pseudo Random Number Generator (PRNG) with a secret key $k$. The $\mathbf{n}$ is then subjected to an adversarial attack method $\Phi$, resulting in our designed NAE, denoted as $\mathbf{n_{adv}}$.  Subsequently, the encoder $\mathcal{E}$ encrypts the feature $\mathbf{x_f}$ with the assistance of $\mathbf{n_{adv}}$, producing the encrypted image $\mathbf{x^\prime_q}$. Essentially, the image $\mathbf{x^\prime_q}$ not only ensures privacy preservation but also retains the class label $\hat{y}$ of $\mathbf{x}$ assigned by the classifier $\mathcal{C}$. Note that we only consider the cases where the classifier achieves accurate classification.          
    \item \textbf{Image Classification Module}: $\mathcal{C}(\mathbf{x^\prime_q})\rightarrow\hat{y}$. The cloud server utilizes the identical plaintext-domain classifier $\mathcal{C}$ to conduct inference on the protected image $\mathbf{x^\prime_q}$. Denote $\hat{y}$ as the predicted class label. A very desirable property of our design is that $\mathcal{C}$ can be any well-trained SOTA classifier, avoiding the necessity of a dedicated classifier in the encrypted domain.
    \item \textbf{Image Recovery Module}: $\mathcal{D}(\mathbf{x^\prime_q},\mathbf{n_{adv}})\rightarrow\tilde{\mathbf{x}}$. A plaintext image $\tilde{\mathbf{x}}$ can be reconstructed from the encrypted image $\mathbf{x^\prime_q}$ using the decoder $\mathcal{D}$ and the same $\mathbf{n_{adv}}$ employed in the privacy-protection module. The discussion on $\mathbf{x^\prime_q}$ is deferred to Section \ref{sec:pripromodule}. With the pre-negotiated key $k$ shared through a private channel and applying the same adversarial attack method $\Phi$, we ensure the generation of the identical $\mathbf{n_{adv}}$. Such a recovery process takes place on the authorized users' local devices, ensuring that the recovered images are not exposed to the cloud server.
    
\end{itemize}

\subsection{Threat Models}\label{sec:threatmodel}
Considering the aforementioned design goals, we identify three potential threat models that could pose malicious behaviors against our proposed system. In Section \ref{sec:sec6}, we will evaluate the security under these threat models.
\begin{itemize}
    
    \item\textbf{Brute-force Attack}: Adversaries who do not have authorization may attempt to recover the plaintext image through a brute-force attack. A brute-force attack involves exhaustively traversing a wide range of keys in the image recovery module until a comprehensible plaintext is obtained. 
\item\textbf{Cloud Attack}: An honest but curious cloud server, who has access to various ciphertexts and the corresponding classifiers, may attempt to recover the plaintext images from the ciphertexts with the help of their classifiers. 
 \item\textbf{Known-plaintext Attack}: Malicious users, who have authorized access to certain historical ciphertexts and their corresponding restored images, attempt to recover plaintext images from encrypted ones beyond their access rights.

\end{itemize}

\begin{center}
\begin{figure*}[t!]			\centering{\includegraphics[width=1\textwidth]%
{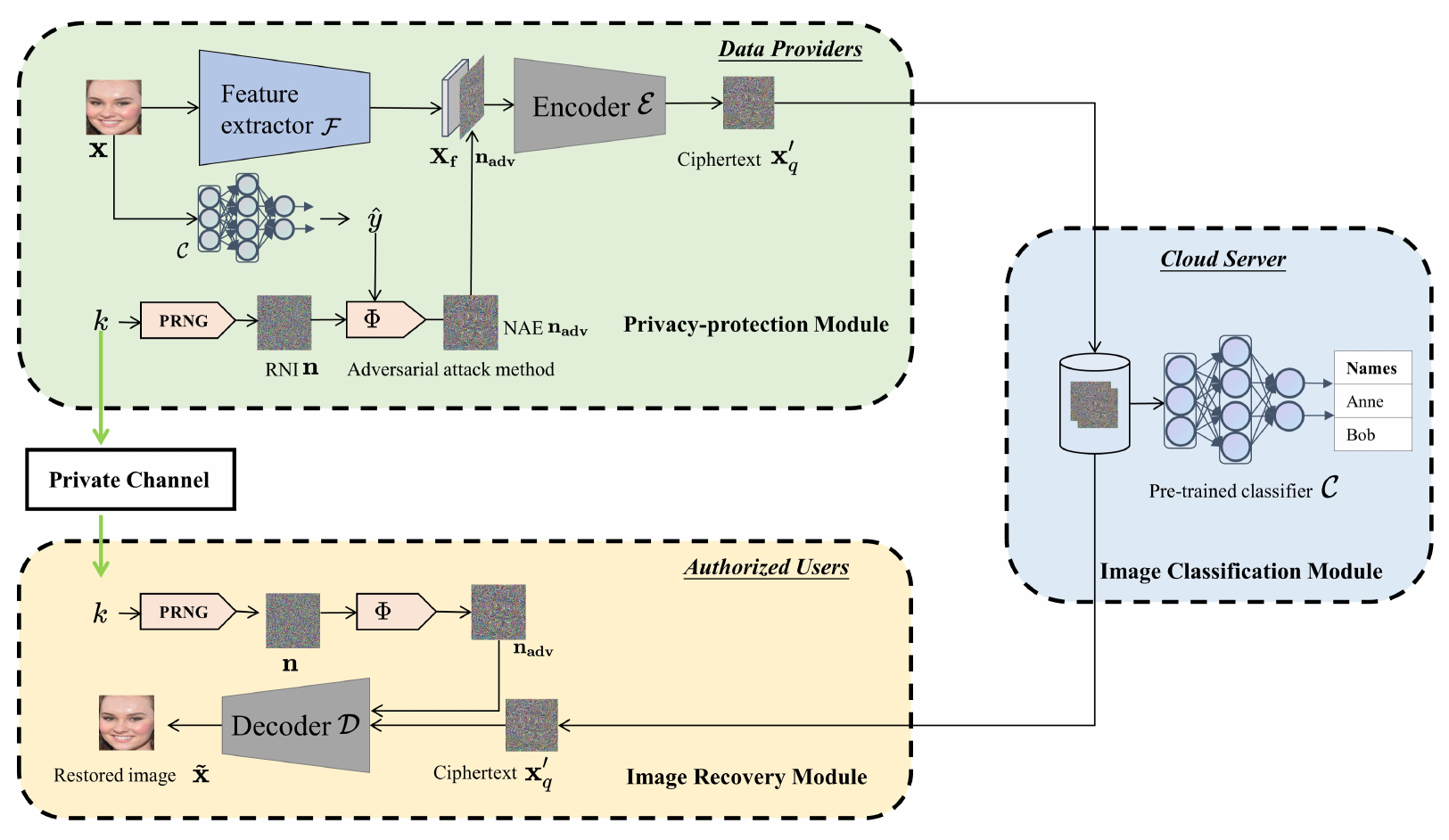}}
			\caption{The system model and architecture of our proposed RIC system.}
			\label{fig:pics}
		\end{figure*}
	\end{center}
 
\section{The proposed RIC framework}\label{sec:sec4}
In this section, we provide a detailed description of the key modules briefly introduced in Section \ref{sec:systemmodel}, as well as the training scheme for the entire system.         

\subsection{Privacy-protection Module}\label{sec:pripromodule}
With regard to the design goal in terms of accuracy and confidentiality, the resulting privacy-protected image $\mathbf{x^\prime_q}$ needs to be classified by the classifier $\mathcal{C}$ as the same label of the plaintext image $\mathbf{x}$. Our objective in the privacy-protection module is to transform $\mathbf{x}$ into an image that exhibits a noisy texture for protecting its privacy, while still being classified correctly as $\hat{y}$ by $\mathcal{C}$.

Foremost, we perform a targeted adversarial attack on an RNI, which is sampled randomly from a uniform distribution, to generate an NAE. After a successful adversarial attack procedure, the resulting NAE can be classified as an arbitrarily pre-defined target class label by the classifier, while preserving its noisy texture. Specifically, to initiate the attack, we employ a PRNG that uses the secret key $k$ to generate an RNI, denoted as $\mathbf{n}$, by sampling values from the uniform distribution $U[0,1]$ in $\mathbb{R}^{d}$, where $d$ represents the dimension of the original image $\mathbf{x}$. Mathematically, the generation of $\mathbf{n}$ can be formulated as: 
 	\begin{equation}\label{eq:n1}
		\begin{aligned}
			\mathbf{n}=\mathop{PRNG}(k,U).
		\end{aligned}
	\end{equation}
Subsequently, the corresponding NAE $\mathbf{n_{adv}}$ can be calculated by   

\begin{equation}\label{eq:nadv}
		\begin{aligned}
	\mathbf{n_{adv}}=\mathbf{n}+\Delta,
		\end{aligned}
	\end{equation} where $\Delta$ is obtained through a targeted attack on $\mathbf{n}$. Specifically, we solve the following optimization problem by minimizing the widely-adopted margin loss \cite{carlini2017towards}:      

\begin{equation}\label{eq:clsnadv}
\begin{gathered}
\mathop{\min}_{\Delta}\mathop{\max}\left\{\mathop{\max}_{i\neq\hat{y}}\mathcal{C}(\mathbf{n}+\Delta)[i]-\mathcal{C}(\mathbf{n}+\Delta)[\hat{y}],-g_1\right\}\\
s.t. \ \mathbf{n}+\Delta\in[0,1]^d,
\end{gathered}
\end{equation}
where the prediction class label $\hat{y}=\mathop{\arg\max}_{i}\mathcal{C}(\mathbf{x})[i]$, $\mathcal{C}(\cdot)=[c_0,c_1,...,c_{v-1}]$ represents the logits output of the penultimate layer of $\mathcal{C}$ and $\mathcal{C}(\cdot)[i]$ stands for the score of the $i$-th class. The parameter $g_1\geq0$ indicates the adversarial extent of the resulting $\Delta$. The constraint in (\ref{eq:clsnadv}) ensures that the perturbed image remains a valid digital image. 

It should be noted that there must exist a solution $\Delta$ to the optimization problem (\ref{eq:clsnadv}), regardless of the original predicted class of $\mathbf{n}$ and the target class $\hat{y}$. In particular, the $d$-dimensional space, where the data distribution is located, is divided into multiple polyhedra $P_j$ by the decision boundary of the classifier\cite{moosavi2016deepfool}. Each polyhedron $P_j$ contains points that belong to the same class $j$. If there exists a point $\mathbf{o} \in P_{\hat{y}}$, then we can naturally obtain a vector $\Delta$ by calculating $\mathbf{o} - \mathbf{n}$. Therefore, there must be a $\Delta = \mathbf{o} - \mathbf{n}$ that satisfies $\mathop{\arg\max}_{i}\mathcal{C}(\mathbf{n} + \Delta)[i] = \mathop{\arg\max}_{i}\mathcal{C}(\mathbf{o})[i]=\hat{y}$.  

To explicitly find the perturbation $\Delta$, we utilize the Adam optimizer \cite{kingma2014adam} to solve the problem (\ref{eq:clsnadv}). For simplicity, we express the process of generating $\Delta$ as:
\begin{equation}\label{eq:nae}
		\begin{aligned}			\Delta=\Phi(\mathbf{n}),
		\end{aligned}
	\end{equation}
where $\Phi$ encapsulates the procedure for obtaining $\Delta$ as an adversarial attack method. Here, we select the margin loss and Adam because they are used in the C\&W attack, which shows a higher success rate compared to many gradient projection approaches integrated with CE loss. It is worth mentioning that our scheme is flexible to be incorporated with more powerful loss functions and more effective solvers when they are available.    

Upon obtaining $\mathbf{n_{adv}}$, we employ it to safeguard the privacy information of the plaintext image $\mathbf{x}$. The specific procedure is depicted in Fig.\ref{fig:encoder}(a), where the plaintext image is initially fed into a CNN-based feature extractor $\mathcal{F}$ to acquire the image feature $\mathbf{x_f}$, enabling image encryption in the feature domain. Subsequently, $\mathbf{n_{adv}}$ and $\mathbf{x_f}$ are concatenated and serve as the input to the encoder. The encoder consists of three stacks of convolutions with kernels of varying sizes, allowing for the encoding of the input at different receptive fields. This encoder fuses the information of $\mathbf{n_{adv}}$ and $\mathbf{x_f}$, and eventually generates a preliminary privacy-protected image $\mathbf{u_{en}}$. In order to facilitate $\mathbf{u_{en}}$ being recognized as belonging to the same class as $\mathbf{x}$, thereby ensuring classification accuracy, we introduce a shortcut from $\mathbf{n_{adv}}$ to $\mathbf{u_{en}}$, as indicated by the red line in Fig.\ref{fig:encoder}(a). Essentially, this shortcut incorporates the category information of $\mathbf{n_{adv}}$ into $\mathbf{u_{en}}$ such that $\mathbf{u_{en}}$ gets closer to the class of $\mathbf{n_{adv}}$, namely that of $\mathbf{x}$. Meanwhile, it directly leverages the noise appearance of $\mathbf{n_{adv}}$ to further enhance the privacy protection capability of $\mathbf{u_{en}}$.  By blending $\mathbf{n_{adv}}$ and $\mathbf{u_{en}}$ in a weighted manner, the preliminary privacy-protected image $\mathbf{x^\prime}$ is generated as:

 	\begin{equation}\label{eq:en1}
  \begin{aligned}
			\mathbf{x^\prime} &= (1-\lambda)\cdot\mathbf{u_{en}} + \lambda\cdot\mathbf{n_{adv}}\\
   	&=(1-\lambda)\cdot\mathcal{E}\left(\mathcal{F}(\mathbf{x})||\mathbf{n_{adv}}\right)+\lambda\cdot\mathbf{n_{adv}},
    \end{aligned}
	\end{equation}
 where || denotes a channel-wise concatenation operation, and $\lambda\in(0,1)$ balances the relative importance of our design goals. More discussions on $\lambda$ are deferred to Section \ref{sec:abation}.    

Subsequently, to make $\mathbf{x^\prime}\in[0,1]^d$ display appropriately, it has to be further quantized into its standard PNG format where pixel values are integers within the range of [0,255]. Then, the quantized ciphertext, upon entering the image recovery module, undergoes an interval transformation from [0,255] back to [0,1] for further processes. In order to mitigate the errors introduced by the rounding operation in the quantization step, we involve the quantization and interval transformation steps, dubbed as $\mathbf{Q}(\cdot)$, during the training process. However, due to the rounding operation, quantization itself is non-differentiable. To ensure the differentiability of the loss function, we employ additive uniform noise on $\mathbf{x}^\prime$, akin to \cite{cheng2021iicnet}. To this end, at the training stage, the eventual ciphertext $\mathbf{x^\prime_q}$ is generated by: 
 \begin{equation}\label{eq:xq}
		\begin{aligned}
		\mathbf{x^\prime_q}&=\mathbf{Q}(\mathbf{x}^\prime)\\
  &=\mathrm{clip}(255\cdot\mathbf{x}^\prime+u,0,255)/255,
		\end{aligned}
	\end{equation}
 where the uniform noise $u\sim U[-0.5,0.5]$.

\begin{center}
		\begin{figure}[t!]
			\centering{\includegraphics[width=0.8\textwidth]{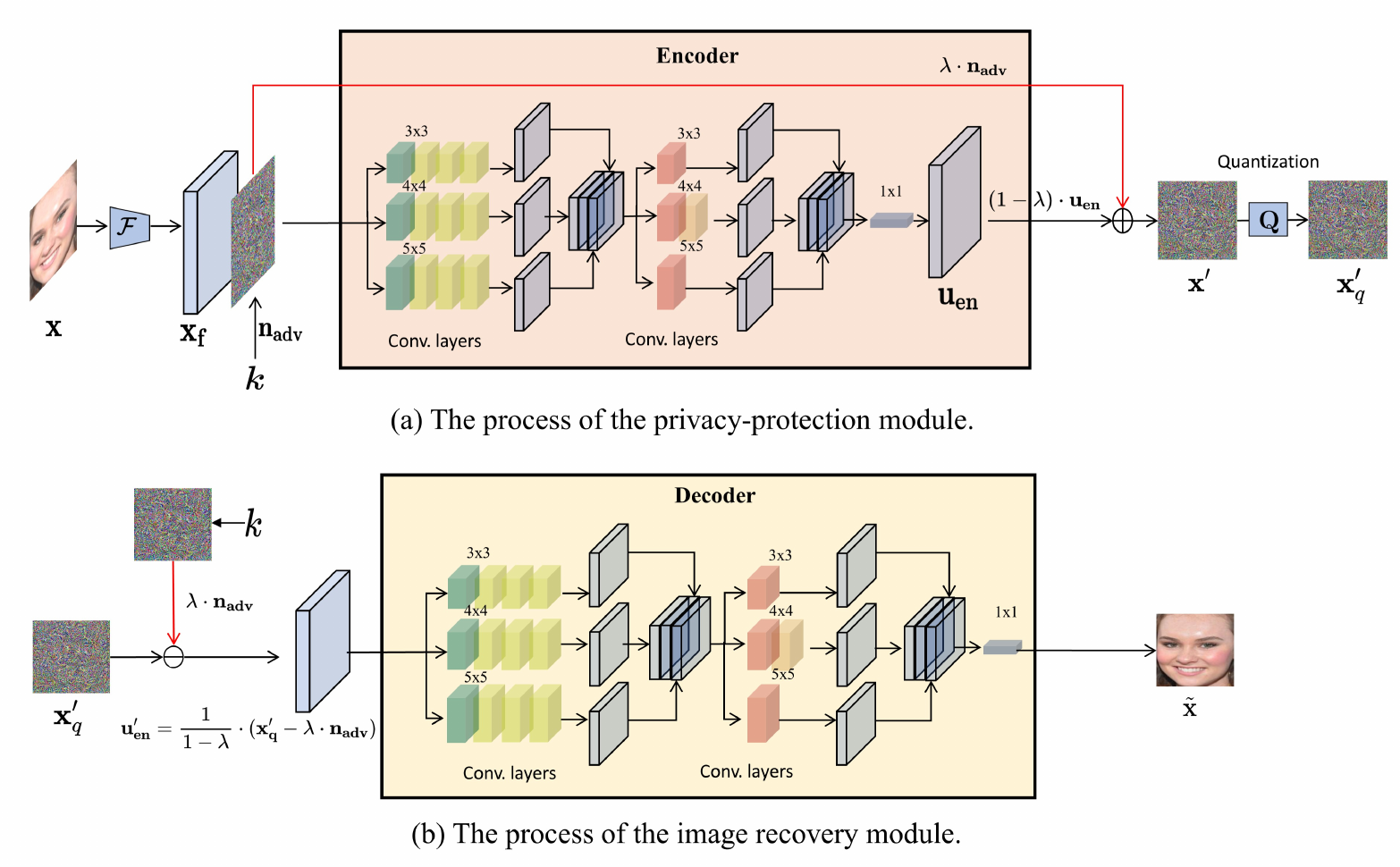}}
			\caption{The architecture of privacy-protection and image recovery modules in our proposed RIC system.}
			\label{fig:encoder}
		\end{figure}
	\end{center} 
 
\subsection{Image Classification Module}
The cloud server can directly utilize the same plaintext-domain classifier $\mathcal{C}$ used in the privacy-protection module to perform classification on the ciphertext. It is worth noting that this classifier can be any well-trained classifier, eliminating the need to train a dedicated classifier specifically for use in the ciphertext domain. Such a property allows us to share the advantages of the constantly improving SOTA classifiers. Mathematically, the classification result of the ciphertext $\mathbf{x^\prime_q}$ can be expressed as: 
\begin{equation}\label{eq:classify}	
		\tilde{y}=\mathop{\arg\max}\limits_{i}\mathcal{C}(\mathbf{x^\prime_q})[i].\\	
	\end{equation} 

As will be validated experimentally later, the predicted class $\tilde{y}$ of the ciphertext remains consistent with the class $\hat{y}$ of the plaintext with a very high probability. This is because our system design ensures that the ciphertext's classification result is predominantly governed by the NAE employed for the encryption, and this NAE's category is intentionally aligned with the original plaintext. In addition, the property that the classifier achieves excellent classification performance in both the plaintext and ciphertext domains also enables the classifier to work seamlessly without distinguishing between the ciphertext domain and the plaintext domain.

\subsection{Image Recovery Module}\label{sec:decryption}
 The image recovery module aims to decode the ciphertext $\mathbf{x^\prime_q}$ into the plaintext image $\tilde{\mathbf{x}}$ that closely resembles the original plaintext image $\mathbf{x}$. In this module, as depicted in Fig.\ref{fig:encoder}(b), the authorized user receives the pre-negotiated key $k$ through a private channel, and then uses it to generate the identical NAE $\mathbf{n_{adv}}$ as the one in the privacy-protection module. The generation of $\mathbf{n_{adv}}$ in the image recovery module can be summarized as follows:
 	\begin{equation}\label{eq:n1de}
   \begin{aligned}
      \begin{cases} 
   \mathbf{n}=\mathop{PRNG}(k,U),\\
   \Delta=\Phi(\mathbf{n}),\\
   \mathbf{n_{adv}} = \mathbf{n}+\Delta.
   \end{cases}
		\end{aligned}
	\end{equation}

 The generated $\mathbf{n_{adv}}$ is then subtracted from the encrypted image $\mathbf{x^\prime_q}$ received from the cloud. This subtraction operation gives rise to an image $\mathbf{u^\prime_{en}}$ that bears an utmost resemblance to the output of the encoder $\mathbf{u_{en}}$. The generation of $\mathbf{u^\prime_{en}}$ can be calculated by:
 \begin{equation}\label{eq:decoder1}
		\begin{aligned}
		\mathbf{u^\prime_{en}}&=\frac{1}{1-\lambda}\cdot(\mathbf{x^{\prime}_q}
			-\lambda\cdot\mathbf{n_{adv}}).
		\end{aligned}
	\end{equation}
The addition and subtraction operations represented by (\ref{eq:en1}) and (\ref{eq:decoder1}), i.e. the read lines in Fig.\ref{fig:encoder}(a) and Fig.\ref{fig:encoder}(b), respectively, collectively constitute the SRL. This SRL ensures that the input of the decoder $\mathbf{u^\prime_{en}}$ closely matches the output of the encoder $\mathbf{u_{en}}$, which facilitates efficient encoding and decoding, leading to improved image reconstruction performance. Further justification of this claim is provided in Section \ref{sec:abation}. 
 
  Finally, we employ a CNN-based decoder to reconstruct $\mathbf{u^\prime_{en}}$ into the plaintext image $\tilde{\mathbf{x}}$ which closely resembles the original image $\mathbf{x}$. As shown in Fig.\ref{fig:encoder}(b), the architecture of this decoder is designed to be symmetrical to that of the encoder. The generation of $\tilde{\mathbf{x}}$ can be formulated as: 
    	\begin{equation}\label{eq:decoder3}
		\begin{aligned}
			\tilde{\mathbf{x}}&=\mathcal{D}(\mathbf{u^\prime_{en}}).	
		\end{aligned}
	\end{equation}

\subsection{Training the RIC System}
 During the training procedure of our RIC model, we focus on optimizing three crucial sub-networks: the feature extractor $\mathcal{F}$, the encoder $\mathcal{E}$, and the decoder $\mathcal{D}$. We use pairs of plaintext images along with their corresponding ground-truth labels $y$ as training data. The objective of the training is to minimize the following loss function $\mathcal{L}$ with respect to the network parameters of $\mathcal{F}$, $\mathcal{E}$ and $\mathcal{D}$: 
	
	\begin{equation}\label{eq:loss2}
		\begin{aligned}		
			\mathcal{L}=\beta\cdot\mathcal{L}_{rec} 
			+\mathcal{L}_{adv},
		\end{aligned}
	\end{equation}
 where $\beta$ is a weighting factor, $\mathcal{L}_{rec}$ represents the reconstruction loss, and $\mathcal{L}_{adv}$ denotes the adversarial loss. Here, the reconstruction loss $\mathcal{L}_{rec}$ is used to measure the recovery error and can be computed using various Euclidean distance metrics. In our approach, we employ the $L_2$ distance:
    \begin{equation}\label{eq:decoder4}
		\begin{aligned}
			\mathcal{L}_{rec} = \parallel\mathbf{x}-\tilde{\mathbf{x}}\parallel_{2}.
		\end{aligned}
	\end{equation}
	To evaluate the attack capability of $\mathbf{x^\prime_q}$, we employ the margin loss\cite{carlini2017towards} for defining $\mathcal{L}_{adv}$, namely, 
    \begin{equation}\label{eq:loss1}
		\begin{aligned}
			\mathcal{L}_{adv}= \mathop{\max}\left\{\mathop{\max}\limits_{i\neq y}\mathcal{C}(\mathbf{x^{\prime}_q})[i]-\mathcal{C}(\mathbf{x^{\prime}_q})[y],  -g_2 \right\},
		\end{aligned}
	\end{equation} where $g_2\geq0$ is an adjustable parameter that controls the adversarial extent of $\mathbf{x^{\prime}_q}$.

\begin{table}[t]
  \centering
  \caption{The classification accuracy ($\%$) comparisons with PPML methods.}
  \scalebox{0.85}{
    \begin{tabular}{c|c|c|c|c|c|c|c|c|c}
    \hline\hline 
    Methods&\multicolumn{3}{c|}{ImageNet} &\multicolumn{3}{c|}{Cifar-10} & \multicolumn{3}{c}{MNIST} \Tstrut\\
\cline{2-10}&$ACC_p$&$ACC_e$          & $\mathop{ADP}\downarrow$&$ACC_p$&$ACC_e$          & $\mathop{ADP}\downarrow$& $ACC_p$&$ACC_e$          & $\mathop{ADP}\downarrow$  \Tstrut\\
    \hline
    RIC (\textbf{Ours}) & 77.80& \textbf{77.80}    & \textbf{0.00}    & 97.09& \textbf{97.09}    & \textbf{0.00}  & 99.75 & \textbf{99.75} & \textbf{0.00}   \Tstrut\\
        \hline  
     InstaHide\cite{huang2020instahide} &77.80&72.90&6.30 & 97.09 & 91.46  &5.80 &99.75& 98.35&1.40\Tstrut \\  \hline  
     Cloak\cite{mireshghallah2021not}  &77.80 &23.19&70.19  &97.09  &60.58  &37.58 & 99.75 &85.42 &14.37 \Tstrut\\
        \hline % 
         TAPAS\cite{sanyal2018tapas}&    77.80   &   7.84 &  89.92   &  97.09&    61.27 &36.89 & 99.75 & 97.97 & 0.78     \Tstrut \\ 
    \hline
    ARDEN\cite{wang2018not} &- &-&-& 97.09 & 88.31 & 9.04& 99.75 & 99.70& 0.05\Tstrut \\
        \hline% 
    MiniONN\cite{liu2017oblivious} &- &-&-& 97.09   & 81.61 & 5.36  & 99.75& 99.17& 0.58   \Tstrut\\
    \hline\hline
    \end{tabular}%
    }
  \label{tab:ppmlacc}%
\end{table}%

 \begin{center}
		\begin{figure}[t]	\centering{\includegraphics[width=1\textwidth]{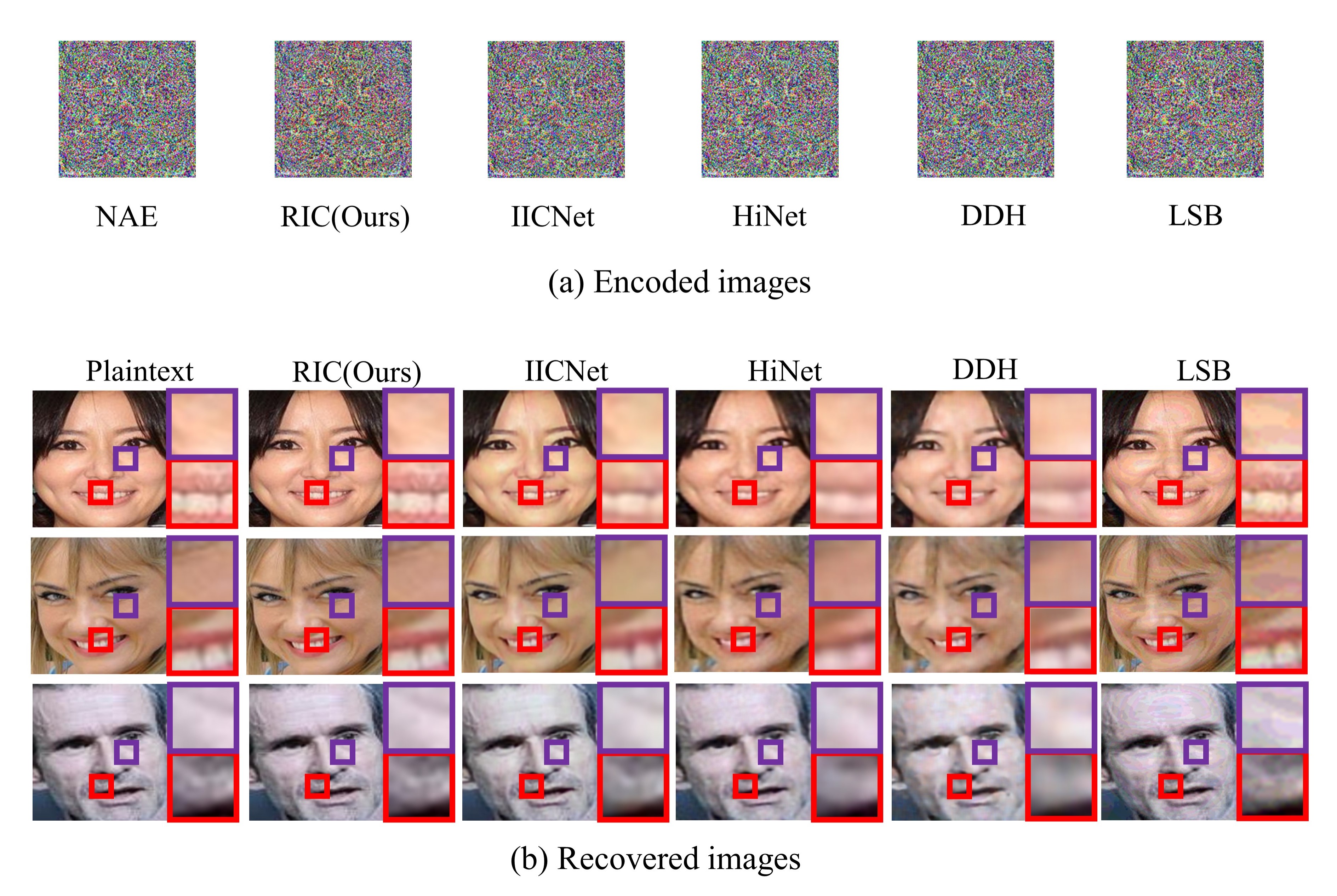}}
			\caption{Visualization of the encoded and recovered images by different methods.}
			\label{fig:samples}
		\end{figure}
	\end{center}

\begin{table*}[t]
  \centering
  \caption{The confidentiality in terms of recoverability performance compared with steganography-like methods.}
  \scalebox{0.85}{
    \begin{tabular}{c|c|c|c|c|c|c}
   \hline\hline
    Datasets & \multicolumn{3}{c|}{SVHN}                             & \multicolumn{3}{c}{VGGFace2}\Tstrut \\
    \hline
    Methods &  PSNR$\uparrow$& SSIM $\uparrow$& LPIPS$\downarrow$ &  PSNR$\uparrow$& SSIM$\uparrow$ & LPIPS$\downarrow$ \Tstrut\\
    \hline
    RIC (\textbf{Ours})  &  \textbf{51.46} & \textbf{0.9995} & \textbf{0.0001}  & \textbf{48.04} & \textbf{0.9972} & \textbf{0.0015}   \Tstrut\\
    \hline
    LSB\cite{lsb1998} w/NAE  & 31.26  & 0.9553  & 0.0271    & 31.80  & 0.8681  & 0.1388   \Tstrut \\
    \hline
    IICNet\cite{cheng2021iicnet} w/NAE & 38.59  & 0.9908  & 0.0058   &  34.06  & 0.9535  & 0.0853 \Tstrut \\
    \hline
    HiNet\cite{jing2021hinet} w/NAE&  34.04  & 0.9744  & 0.0135  & 33.53  & 0.9330  & 0.1206\Tstrut \\
    \hline
    DDH\cite{zhang2020udh} w/NAE&  29.74  & 0.9318  & 0.0259   & 28.29  & 0.8861  & 0.1495  \Tstrut\\
  
    \hline\hline
    \end{tabular}%
    }
  \label{tab:total1}%
\end{table*}%
 
\section{Experimental results}\label{sec:sec5} 
	In this section, we provide experimental results of our proposed RIC and compare it with SOTA methods. We begin by presenting the experimental settings of RIC and the other competing methods. Following that, we introduce the evaluation criteria. We then report the results on the classification accuracy, confidentiality and recovery performance. Additionally, we investigate the generalization of our model to different datasets, and analyze the impact of critical elements in the RIC.

 \subsection{Experimental Settings}\label{section:exset}
	\subsubsection{Datasets} We mainly conduct experiments on two datasets: Street View House Numbers (SVHN) \cite{netzer2011reading} and VGGFace2\cite{cao2018vggface2}, which contain privacy-sensitive house numbers and human faces, respectively. Specifically, SVHN dataset consists of 90,000 images of size $32\times32$, while VGGFace2 dataset has a training set of 3 million images of 8,631 identities. We use aligned and cropped faces from the VGGFace2 dataset, which are resized to $160\times160$. Additionally, to evaluate the generalization of our approach to different datasets, we further perform tests on other datasets, including the small-scale Cifar-10\cite{krizhevsky2009learning} (10 classes with size $32\times32$ ) and the large-scale ImageNet\cite{deng2009imagenet} (1000 classes). Images from ImageNet are aligned and cropped to a size of $256\times256$. To enable a comparison with existing PPML methods, we also conduct experiments on the MNIST dataset.        

	\subsubsection{Classifiers}
 In our experiments, we choose publicly available classifiers that are both widely used and high-performing as $\mathcal{C}$ in RICs. For SVHN, we use a CNN with 7 convolutional layers and 1 fully connected layer \cite{playground}. For VGGFace2, we adopt the FaceNet\cite{schroff2015facenet} with a linear classification layer. For Cifar-10 and ImageNet, the adopted classifiers are ResNet-18\cite{he2016deep} and ResNet-50\cite{he2016deep}, respectively. For MNIST, the classifier is a simple CNN with 5 convolutional layers and 1 fully connected layer\cite{an2020ensemble}. These classifiers all achieve good classification performance, i.e., 96.00/90.20/97.09/77.80/99.75\% accuracy for the 10/8,631/10/1,000/10 classes image classification task on SVHN/VGGFace2/Cifar-10/ImageNet/MNIST test sets, respectively.

	\subsubsection{Implementation Details} 
	During the training process of our RIC, we employ the Adam optimizer to minimize the training loss with a learning rate of $1e-4$. We set $\lambda=0.8$ in (\ref{eq:en1}) and (\ref{eq:decoder1}), $\beta=10$ in (\ref{eq:loss2}), $g_1=5$ in (\ref{eq:clsnadv}), and $g_2=5$ in (\ref{eq:loss1}).

	\subsection{Comparative Methods}
 To assess the effectiveness of our proposed RIC system, we conduct a comparative evaluation against two categories of methods: SOTA PPML approaches that focus on classification accuracy in the encrypted domain and traditional steganography-like approaches. The former type includes MiniONN\cite{liu2017oblivious}, ARDEN\cite{wang2018not}, TAPAS\cite{sanyal2018tapas}, Cloak\cite{mireshghallah2021not} and InstaHide\cite{huang2020instahide}. The latter type encompasses LSB\cite{lsb1998}, IICNet\cite{cheng2021iicnet}, HiNet\cite{jing2021hinet}, and DDH\cite{zhang2020udh}. These steganography-like techniques were initially devised to obfuscate images within a host image, commonly referred to as a container image, thereby producing a privacy-protected image. In these methods, natural images are used as container images during the training process. To facilitate fair comparisons, we have re-implemented them by replacing the container images with NAEs, which aligns with the training process of RIC. For LSB, we choose a commonly used 4-bit scheme for encoding images.

	\subsection{Criteria}
	To evaluate the recovery performance, we use LPIPS\cite{zhang2018unreasonable}, PSNR, and SSIM metrics. 
 In terms of the classification performance, we calculate the classification accuracy on the encrypted images, denoted as $\mathop{ACC_e}$, either using the plaintext-domain classifier $\mathcal{C}$, as in our RIC, or dedicated classifier in other PPML schemes. Let also $ACC_p$ represents the classification accuracy on plaintext images via a plaintext-domain classifier. We then define the Accuracy Drop Proportion ($\mathop{ADP}$) metric to measure the classification accuracy drop between plaintext and encrypted domains:      

 \begin{equation}\label{eq:drop}
		\begin{aligned}		
			\mathop{ADP} =\frac{ACC_p- ACC_e}{ACC_p}\times 100\%.
		\end{aligned}
	\end{equation} Clearly, smaller $\mathop{ADP}$ is highly desirable, as the encrypted-domain classification accuracy gets close to that of the plaintext domain.

% Table generated by Excel2LaTeX from sheet '工作表2'
\begin{table}[t!]
  \centering
  \caption{The comparisons of the generalization ability between RIC and IICNet\cite{cheng2021iicnet} models trained on SVHN and VGGFace2 (RIC-SVHN,RIC-VGGFace2,IICNet-SVHN,IICNet-VGGFace2). The items marked with $*$ indicate that the training and test data are from the same dataset.}
  \scalebox{0.7}{
    \begin{tabular}{c|c|c|c|c|c|c|c|c|c|c|c|c}
    \hline\hline
     Datasets & \multicolumn{3}{c|}{Cifar-10} & \multicolumn{3}{c|}{ImageNet} & \multicolumn{3}{c|}{SVHN} & \multicolumn{3}{c}{VGGFace2}\Tstrut\\
     \hline
Models          & PSNR$\uparrow$  & SSIM$\uparrow$  & LPIPS$\downarrow$ & PSNR$\uparrow$  & SSIM$\uparrow$  & LPIPS$\downarrow$ & PSNR$\uparrow$  & SSIM$\uparrow$  & LPIPS$\downarrow$ & PSNR$\uparrow$  & SSIM$\uparrow$  & LPIPS$\downarrow$ \Tstrut\\
    \hline
    RIC-SVHN & \textbf{40.03} & \textbf{0.9963} & \textbf{0.0004} & \textbf{41.43} & \textbf{0.9839} & \textbf{0.0029} & \textbf{51.46*} & \textbf{0.9995*} & \textbf{0.0001*} & 47.76 & 0.9971 & 0.0020 \Tstrut\\
    \hline
     RIC-VGGFace2  & 36.94 & 0.9920 & 0.0011 & 38.96 & 0.9740 & 0.0079 & 45.35 & 0.9981 & 0.0007 & \textbf{48.04*} & \textbf{0.9972*} & \textbf{0.0015*}\Tstrut\\
    \hline
    IICNet-SVHN & 26.33& 0.9207 & 0.0513 & 29.65 & 0.9109 & 0.1850 & 38.59* & 0.9908* & 0.5508* & 34.09 & 0.9657 & 0.0828  \Tstrut\\
    \hline
    IICNet-VGGFace2 & 22.43 & 0.8309 & 0.1305 & 26.72 & 0.8300  & 0.3291 & 29.80  & 0.9462 & 0.0258 & 34.06* & 0.9535* & 0.0853* \Tstrut\\
    \hline\hline
    \end{tabular}%
    }
  \label{tab:gen}%
\end{table}%

 \subsection{Accuracy, Confidentiality, and Generalization Performance}\label{sec:acc}
We now present experimental results of comparative methods in terms of our three design goals, i.e. classification accuracy, confidentiality, and generalizability.

 1) \textbf{Accuracy}: The comparison of classification accuracies is summarized in Table \ref{tab:ppmlacc}. It can be observed that our method maintains a 0\% accuracy drop across various test datasets. This is because our method directly utilizes classifiers trained in the plaintext domain for classifying ciphertexts, without requiring dedicated classifiers that may have components detrimental to accuracy. Additionally, the combination of NAEs and our encoder enhances the adversarial capability of the resulting ciphertexts, enabling ciphertexts to effectively deceive the classifier. In contrast, other comparative approaches suffer from varying degrees of accuracy degradation, and descents are particularly severe on large-scale datasets. For example, in ImageNet, the accuracy of Cloak and TAPAS drops from 77.80\% to only $23.19\%$ and $7.84\%$, respectively. Although InstaHide experiences a mere 6\% decline, it's worth noting that, their generated ciphertexts cannot guarantee a sufficiently secure privacy protection effect, as acknowledged by the method itself \cite{huang2020instahide}. Noting that both MiniONN and ARDEN lack accuracy performance on ImageNet, denoted by "-" in Table \ref{tab:ppmlacc}, as the principle of MiniONN renders this approach impractically time-consuming for communication on ImageNet, and the ARDEN method lacks many crucial implementation details for ImageNet.
 
2) \textbf{Confidentiality}: In terms of privacy-preserving performance, thanks to the noisy appearance of NAEs, ciphertexts generated by competing methods all resemble noise-like images, as depicted in Fig.\ref{fig:samples}(a). Thus, we illustrate ciphertexts of only one image randomly selected from the VGGFace2 test dataset as an example here. Clearly, these ciphertexts composed primarily of noisy textures make them challenging for human observers to discern any meaningful semantic information.

When it comes to recoverability, our RIC exhibits superior performance than comparative techniques. As can be seen in Fig.\ref{fig:samples}(b), images recovered by our RIC maintain much better texture details than IICNet, HiNet and DDH, such as teeth and lip edges of faces at the first row (see red blocks), and skin texture at the second row (see purple blocks). Meanwhile, our RIC preserves more satisfactory color fidelity than IICNet, DDH and LSB. For instance, the skin tones restored by IICNet do not align with the plaintext. DDH and LSB introduce a plethora of undesired colorful artifacts across the entire visage.

In addition, we provide quantitative results for image recovery performance on the SVHN and VGGFace2 datasets in Table \ref{tab:total1}. For fair comparisons, the container images for concealing the plaintext images in compared methods (LSB, IICNet, HiNet and DDH) are substituted with the same NAE employed in our RIC during both the training and testing stages. It is demonstrated that our approach performs much better compared to other approaches, such as achieving approximately 13/14 dB higher PSNR on SVHN/VGGFace2 compared to the second-best method, i.e. IICNet.

3) \textbf{Generalization}: We now conduct experiments to evaluate RIC's generalization ability concerning the encryption and decryption functionalities. This refers to the ability of a well-trained RIC to encrypt and decrypt a test dataset different from the training dataset (\textit{cross-dataset}) \cite{nadimpalli2022improving}. Considering that IICNet demonstrates superior recovery performance among steganography-like methods (see Table \ref{tab:total1}), we here only compare RIC and IICNet in terms of their generalizability. There are four comparative models in total: RIC-SVHN and IICNet-SVHN trained on the SVHN dataset, as well as RIC-VGGFace2 and IICNet-VGGFace2 trained on the VGGFace2 dataset. These four models are then tested on four different datasets, each consisting of 500 randomly selected images from CIFAR-10, ImageNet, SVHN and VGGFace2 datasets, respectively. Here, the latter two datasets correspond to the in-dataset scenarios, serving as performance baselines.

The results are compiled in Table \ref{tab:gen}. Specifically, in terms of recoverability, our proposed RIC outperforms IICNet in all cross-dataset cases. For example, on Cifar-10 and ImageNet test datasets, RIC-SVHN and RIC-VGGFace2 both achieve over 9dB PSNR gains than IICNet-SVHN and IICNet-VGGFace2. Particularly, RIC-SVHN obtains 17.6dB higher than IICNet-VGGFace2 on Cifar-10. Such a superior recoverability allows us to apply the RIC to images from various datasets with only a one-time training cost. It should be noted that our RIC achieves even much better recovery performance in the cross-dataset scenario than IICNet in the in-dataset scenario. For instance, on VGGFace2, RIC-SVHN obtains a PSNR of 47.76dB, while IICNet-VGGFace2 only reaches a much lower 34.06dB. Besides the above quantitative results, we also present the visualizations of recovered images by comparative models in Fig.\ref{fig:generalization}(a). It can be observed that RIC-recovered images resulting from totally different datasets are all nearly indistinguishable from their corresponding plaintexts. In contrast, as can be seen from the last two rows of Fig.\ref{fig:generalization}(a), recovered images produced by IICNet models exhibit severe artifacts such as noticeable color distortion, noise, and blurriness.

Regarding the privacy-preserving capability, as illustrated in Fig.\ref{fig:generalization}(b), all the compared models have demonstrated commendable performance. By a close inspection, we notice that the RIC-VGGFace2 has the capability to conceal a greater amount of visual information in its ciphertext compared to RIC-SVHN. It is then recommended to use a RIC trained on a relatively large-scale dataset, which could benefit the generalization tasks. Also, in these cases, the $ADP$ values are consistently 0\%, due to the utilization of our NAEs for hiding the plaintext during both their training and testing stages.

\begin{center}
		\begin{figure}[t]			\centering{\includegraphics[width=1\textwidth]{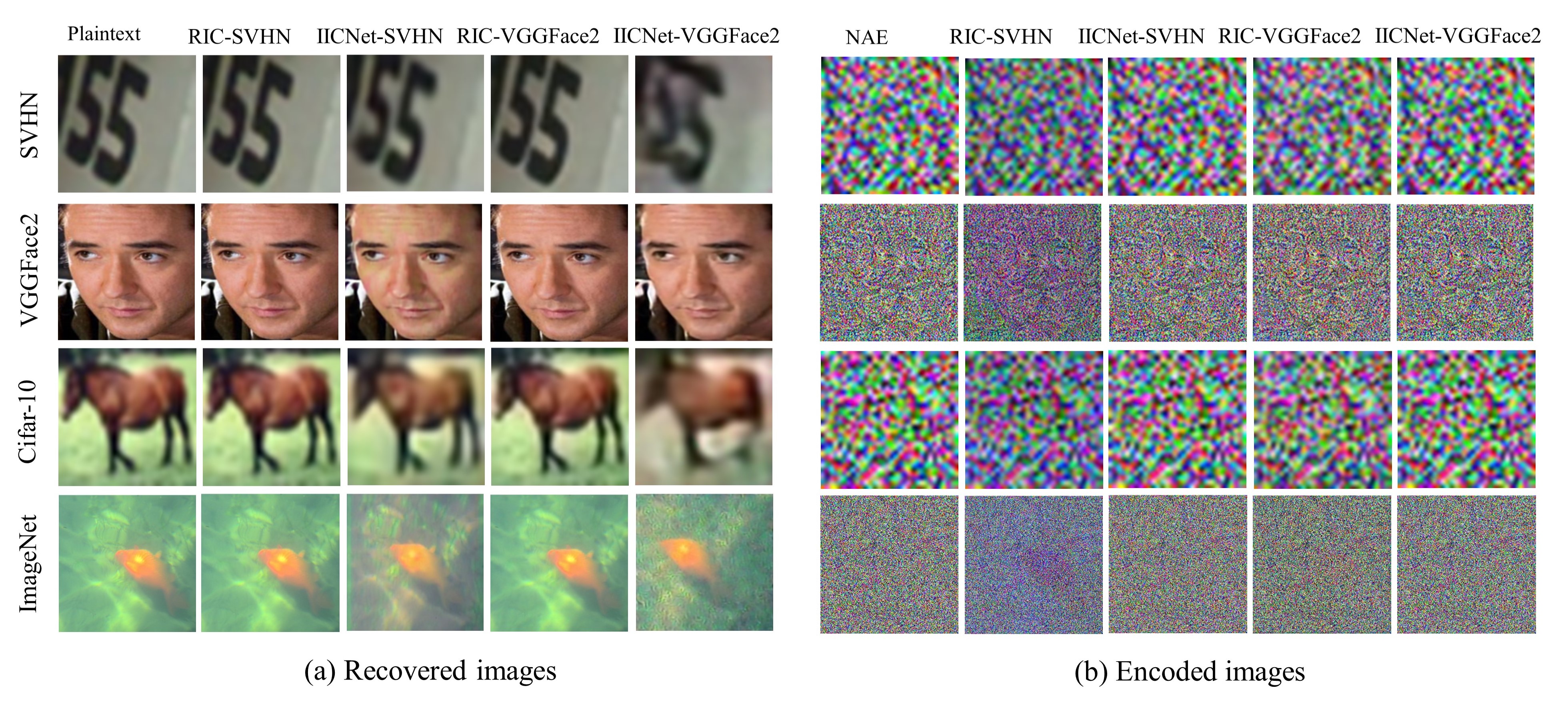}}
			\caption{The visualization of models' generalization on different datasets.}
			\label{fig:generalization}
		\end{figure}
	\end{center}

\begin{table}[t]
  \centering
  \caption{The influence of the NAE and SRL in the RIC. Compared models are trained and tested on the SVHN dataset.}
  \scalebox{0.85}{
    \begin{tabular}{c|c|c|c|c}
    \hline
    \hline
      Metrics&Accuracy& \multicolumn{3}{c}{Recoverability}  \Tstrut\\
     \hline
 Modules & $ADP\downarrow$&PSNR$\uparrow$ & SSIM$\uparrow$  & LPIPS$\downarrow$    \Tstrut\\
    \hline
    \textbf{NAE w/ SRL (Our RIC)} & \textbf{0\%}  & \textbf{51.46} & \textbf{0.9995}  & \textbf{0.0001}\Tstrut\\
 \hline
    RNI w/ SRL & 4.5\% & 31.81 & 0.9671   & 0.0200 \Tstrut\\ 
     \hline
    NAE w/o SRL & \textbf{0\%}    & 50.45 & 0.9993  & \textbf{0.0001} \Tstrut\\
    \hline
    \hline
    \end{tabular}%
    }
  \label{tab:abation2}%
\end{table}%

\subsection{The Influence of the NAE, SRL and $\lambda$}\label{sec:abation}
To study the influence of NAEs and SRL, we train our system separately by controlling the utilization of NAEs and SRL, while keeping other settings unchanged. These compared modules are dubbed as 1) NAE w/ SRL: the intact RIC; 2) RNI w/ SRL: eliminating the adversarial capability of NAE by replacing $\mathbf{n_{adv}}$ in (\ref{eq:en1}) and (\ref{eq:decoder1}) with the associated RNI, i.e. $\mathbf{n}$; and 3) NAE w/o SRL: dismissing the SRL by replacing (\ref{eq:en1}) with $\mathbf{x^\prime}=\mathbf{u_{en}}$ and substituting $\mathbf{u_{en}^\prime}=(\mathbf{x^{\prime}_{q}}||\mathbf{n_{adv}})$ in (\ref{eq:decoder1}). 

The experimental results regarding the classification accuracy and recoverability are reported in Table \ref{tab:abation2}, while privacy-preserving performance is illustrated in Fig.\ref{fig:ablation1}. By comparing the third and fourth rows (i.e. NAE w/ SRL (Our RIC) and RNI w/ SRL) in Table \ref{tab:abation2}, we observe that replacing the NAE with the corresponding RNI leads to a significant decrease in recovery PSNR by $19+$dB and an accuracy drop of $4.5\%$. This finding highlights the importance of the adversarial capability of NAEs in enabling the network to better classify the ciphertext as the desired label, which in turn improves the network's ability to reconstruct images. When comparing NAE w/ SRL with NAE w/o SRL, we find that the incorporation of SRL enhances image recoverability, resulting in a 1dB gain in PSNR. As for the privacy-preserving performance, Fig.\ref{fig:ablation1} visually illustrates that both the absence of the NAE and SRL compromise the system's ability to conceal significant visual information.

With regard to the parameter $\lambda$, we train different RICs on the SVHN dataset by varying $\lambda$ in the range of $(0,1)$, and then evaluate their performance in SVHN (in-dataset) or Cifar-10 (cross-dataset) test datasets. The experimental results are presented in Fig.\ref{fig:lambda}. As can be seen from Fig.\ref{fig:lambda}(a), varying $\lambda$s in the range $[0.1,0.8]$ lead to rather stable accuracy for both in-dataset and cross-dataset scenarios, while the accuracy slightly drops for cross-dataset scenario when $\lambda$ further increases. In addition, from Fig.\ref{fig:lambda}(b), the recoverability tends to increase with respect to the increasing $\lambda$. This could be attributed to the fact that a bigger $\lambda$ implies a larger proportion of NAEs incorporated into ciphertexts according to (\ref{eq:en1}). This, in turn, enhances the adversarial capability of ciphertexts, allowing the RIC to focus more on the image recovery task during training. Regarding the privacy-preserving performance, it fluctuates with very small amplitudes when changing $\lambda$. Therefore, we choose $\lambda=0.8$ in all our experiments, striking a good balance among the above evaluation metrics.

\begin{center}
		\begin{figure}[tbp]
			\centering{\includegraphics[width=0.8\textwidth]{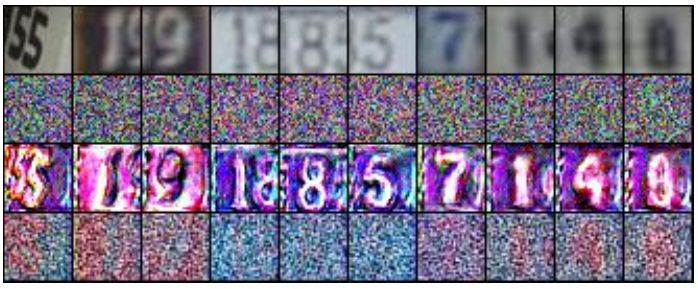}}
			\caption{The influence of the NAE and SRL on the privacy-preserving performance. The first to the fourth rows correspond to plaintext images, encrypted images by NAE w/ SRL (our RIC), RNI w/ SRL and NAE w/o SRL, respectively. Compared models are trained and tested on the SVHN dataset.}
			\label{fig:ablation1}
		\end{figure}
	\end{center}

\section{Security Analysis and Evaluation}\label{sec:sec6}

In this section, we would like to assess the security of our proposed system against various types of attacks, including the brute-force attack, cloud attack, and known-plaintext attack.

	\begin{center}
		\begin{figure}[t]			\centering{\includegraphics[width=1\textwidth]{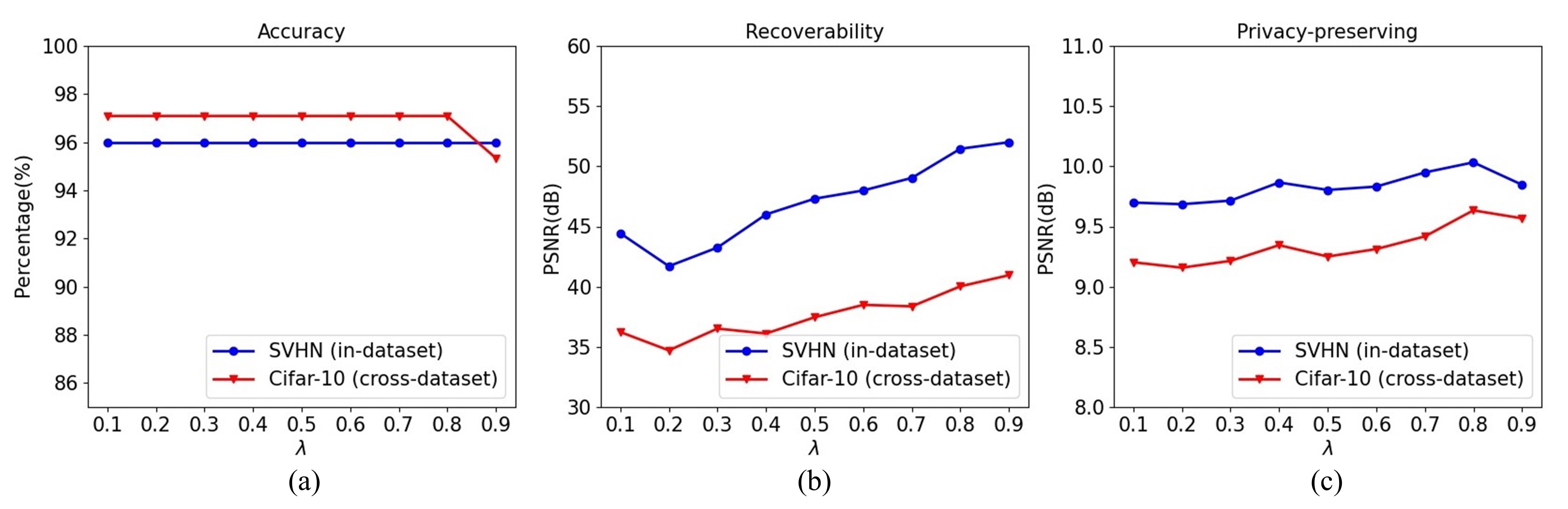}}
			\caption{The influence of $\lambda$ on classification accuracy, recoverability, and privacy-preserving performance in both in-dataset and cross-dataset scenarios.}
			\label{fig:lambda}
		\end{figure}
	\end{center}
\subsection{Brute-force Attack}
An adversary has obtained a ciphertext $\mathbf{x_q^\prime}$, along with the knowledge of the adopted adversarial attack method $\Phi$ and the decoder $\mathcal{D}$, without the secret key $k$. One of the simplest and most straightforward approaches for the adversary is to perform a brute-force attack by exhaustively traversing the key space. To evaluate the security of RIC under such a brute-force attack, we conduct a theoretical analysis, augmented with empirical justifications. Our objective is to demonstrate that it is highly unlikely to recover a plaintext image similar to the original plaintext from a given ciphertext through a brute-force attack. To this end, we have the following Theorem:

\begin{theorem}\label{th1} For an 8-bit d-dimensional image $\mathbf{x_q^\prime}\in\mathbb{R}^d$ encrypted by our proposed method with a secret key $k$, there exists a positive constant 
$\alpha>0$ such that the following inequality holds: 
	\begin{equation}\label{eq:prob} 		\prob{\mathbb{E}_{p(\mathbf{x^\prime_q})}[\mathop{PSNR}(\tilde{\mathbf{x}}^\prime,\tilde{\mathbf{x}})]\geq\frac{\alpha}{m}}\leq\left(\frac{2m+1}{256}\right)^d.
	\end{equation} where $m = ||\tilde{\mathbf{x}}-\tilde{\mathbf{x}}^\prime||_\infty$, $\tilde{\mathbf{x}}$ and $\tilde{\mathbf{x}}^\prime$ represent the images decrypted from $\mathbf{x_q^\prime}$ using the correct key $k$ and a randomly guessed key $k^\prime$, respectively. Also, the constant $\alpha>0$ is specific to the dataset $p(\mathbf{x_q^\prime})$ to which this theorem is applied. 
\end{theorem}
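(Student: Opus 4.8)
The plan is to reduce the probabilistic statement about expected PSNR to a purely combinatorial counting argument over the finite key space, exploiting the fact that an 8-bit $d$-dimensional image lives in a set of size $256^d$. First I would fix the ciphertext $\mathbf{x_q^\prime}$ and the correctly decrypted image $\tilde{\mathbf{x}}=\mathcal{D}(\mathbf{u^\prime_{en}})$ obtained with the true key $k$. For a guessed key $k^\prime$, the wrong NAE $\mathbf{n_{adv}^\prime}=\mathbf{n^\prime}+\Phi(\mathbf{n^\prime})$ feeds through (\ref{eq:decoder1}) and (\ref{eq:decoder3}) to produce $\tilde{\mathbf{x}}^\prime$. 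The key observation is that $\mathop{PSNR}(\tilde{\mathbf{x}}^\prime,\tilde{\mathbf{x}})$ is a monotonically decreasing function of the reconstruction error, and in particular it can be bounded above in terms of $m=\|\tilde{\mathbf{x}}-\tilde{\mathbf{x}}^\prime\|_\infty$: since for an 8-bit image the per-pixel error is at least $1/255$ whenever the images differ, and the MSE is at least $m^2/d$ up to normalization, one gets $\mathop{PSNR}(\tilde{\mathbf{x}}^\prime,\tilde{\mathbf{x}})\le 10\log_{10}(d/m^2)=\alpha'/m$ is not literally an equality, so I would instead absorb the logarithmic dependence and the dataset-dependent dynamic range into the single constant $\alpha$, defining $\alpha$ so that $\mathbb{E}_{p(\mathbf{x_q^\prime})}[\mathop{PSNR}(\tilde{\mathbf{x}}^\prime,\tilde{\mathbf{x}})]\ge \alpha/m$ forces $\tilde{\mathbf{x}}^\prime$ to lie within $\ell_\infty$-distance $m$ of $\tilde{\mathbf{x}}$.

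Next I would count: the number of 8-bit images within $\ell_\infty$-distance $m$ (measured on the $\{0,\dots,255\}$ integer grid) of a fixed image $\tilde{\mathbf{x}}$ is at most $(2m+1)^d$, since each of the $d$ coordinates can independently take at most $2m+1$ integer values. Hence the fraction of all possible decrypted images that satisfy the "good recovery'' condition is at most $(2m+1)^d/256^d=\big(\tfrac{2m+1}{256}\big)^d$. The final step is to transfer this fraction into a probability over a randomly guessed key $k^\prime$: modeling the guessed key as producing an essentially uniformly distributed output image over the 8-bit image space (which is the standard assumption for a cryptographic PRNG composed with the encoder/decoder pipeline), the probability that the randomly guessed key yields a $\tilde{\mathbf{x}}^\prime$ in the good set is at most its relative size, giving exactly (\ref{eq:prob}).

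The main obstacle I expect is justifying the step that turns "a uniform random point in the image space'' into "the output of $\mathcal{D}$ applied to a wrongly-keyed NAE'': strictly, $\tilde{\mathbf{x}}^\prime$ ranges only over the image of the decoder, not all of $\{0,\dots,255\}^d$, and the induced distribution is not exactly uniform. I would handle this by invoking the PRNG's pseudorandomness (so that $\mathbf{n^\prime}$ is computationally indistinguishable from a genuine uniform sample), arguing that the attack map $\Phi$ and decoder $\mathcal{D}$ are fixed deterministic functions so they cannot concentrate mass beyond what a union bound over the target set allows, and stating the uniformity as a modeling assumption consistent with the "theoretical analysis augmented with empirical justification'' framing already announced before the theorem. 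A secondary subtlety is the expectation over $p(\mathbf{x_q^\prime})$ inside the probability: I would note that the constant $\alpha$ is chosen per-dataset precisely to dominate this averaged quantity, so that the event $\{\mathbb{E}_{p(\mathbf{x_q^\prime})}[\mathop{PSNR}]\ge\alpha/m\}$ is contained in the event that $\tilde{\mathbf{x}}^\prime$ is $m$-close to $\tilde{\mathbf{x}}$, after which the counting bound applies verbatim.
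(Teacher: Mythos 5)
Your proposal diverges from the paper's proof in where the counting argument is applied, and that is exactly where it breaks down. You count the $(2m+1)^d$ images in the $\ell_\infty$-ball around the correctly decrypted $\tilde{\mathbf{x}}$ and divide by $256^d$, which requires the wrongly-keyed output $\tilde{\mathbf{x}}^\prime$ to be (essentially) uniform over $\{0,\dots,255\}^d$. You flag this yourself, but the repair you offer --- that $\Phi$ and $\mathcal{D}$ are ``fixed deterministic functions so they cannot concentrate mass beyond what a union bound over the target set allows'' --- is false: the pushforward of a uniform distribution under a deterministic map can be arbitrarily concentrated (a constant map sends everything to one point), and a trained decoder is precisely the kind of map that collapses its input onto a low-dimensional, highly non-uniform output set. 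Pseudorandomness of the PRNG gives you uniformity of the seed noise $\mathbf{n}^\prime$, not of $\mathcal{D}$ applied to it. The paper avoids this trap by performing the counting on the noise images themselves: it bounds $\prob{\|\mathbf{n}^\prime-\mathbf{n}\|_\infty\leq m}\leq\left(\frac{2m+1}{256}\right)^d$, where uniformity genuinely holds by construction of the RNIs, and then supplies the missing link --- a Lipschitz-type bound obtained from the gradient theorem applied to the map $\mathcal{H}_i$ from noise to reconstructed pixel, with the gradient norms estimated via NES --- to convert closeness in noise space into the bound $\alpha/\epsilon$ on the PSNR. That smoothness lemma is the key ingredient your proposal lacks; without it there is no way to relate the key-space/noise-space combinatorics to distances between decoder outputs.

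A secondary problem is your treatment of $\alpha$. The bound on PSNR you can extract from $m=\|\tilde{\mathbf{x}}-\tilde{\mathbf{x}}^\prime\|_\infty$ is logarithmic, roughly $10\log_{10}(255^2 d/m^2)$, because the $\ell_\infty$ norm only controls the MSE up to a factor $m^2/d$. A $\log(1/m)$ dependence cannot be ``absorbed'' into a single constant so as to become $\alpha/m$ with an $\alpha$ that is simultaneously valid for all $m$ and still non-vacuous: forcing the implication for $m=1$ would drive $\alpha$ far above any achievable PSNR, whereas the theorem is used with a meaningful $\alpha=5.64$ on VGGFace2. So even granting the uniformity assumption, the functional form of your bound would not come out as stated.
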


\begin{proof}
   Please refer to Appendix \ref{sec:lemmap} for detailed verification.
\end{proof}

By noticing Theorem \ref{th1}, we can deduce that as the value of $m$ decreases, the probability term $\left(\frac{2m+1}{256}\right)^d$ also decreases, while the term $\frac{\alpha}{m}$ increases. This suggests that the PSNR similarity between $\tilde{\mathbf{x}}$ and $\tilde{\mathbf{x}}^\prime$ is inversely proportional to the probability of achieving this level of similarity. Also, to demonstrate that the occurrence probability of the image reconstructed by $k^\prime$ resembling the image recovered by $k$, i.e., the success of the brute-force attack, is extremely low, we analyze the maximum PSNR that can be achieved for a specific dataset and its probability when $m=1$. 

For the VGGFace2 dataset, which contains plaintext RGB images with $d=160\times160\times3\approx7.7\times10^5$, considering $\frac{2m+1}{256}\approx0.01$ when $m=1$, we have the following inequality:

	\begin{equation} \label{eq:vggbfa}\prob{\mathbb{E}_{p(\mathbf{x^\prime_q})}[\mathop{PSNR}(\tilde{\mathbf{x}}^\prime,\tilde{\mathbf{x}})]\geq5.64}\leq10^{-1.54\times10^6}.
	\end{equation} 
Here $\alpha=5.64$ is obtained by using the definition formula of PSNR, combined with the Natural Evolution Strategy (NES) estimation method \cite{llyas2018blackattack} (see Appendix \ref{sec:lemmap} for more details). The above inequality suggests that the upper bound of the probability that the PSNR between the image recovered by the correct key $k$ and a brute-force guessed key $k^\prime$ is larger than 5.64 is nearly 0. This means that a successful brute-force attack is highly unlikely. The estimation of $\alpha$ is also consistent with the experimental result shown in Fig.\ref{fig:seedsen}(a), from which we can observe that wrong keys yield poor performance, with mean/minimum/maximum PSNR values of 4.84/4.38/5.30 dB, which are all smaller than the theoretical value of 5.64 dB. Meanwhile, only the correct key achieves satisfactory recovery performance, with a PSNR of 47+ dB. The very narrow peaks and basin structure in Fig.\ref{fig:seedsen} also indicate satisfactory key sensitivity of our RIC. Furthermore, in the second line of Fig.\ref{fig:threatattack}, we provide several examples of images recovered by incorrect keys, which convey no semantic information of the original images.

\begin{center}
		\begin{figure}[t]			\centering{\includegraphics[width=1\textwidth]{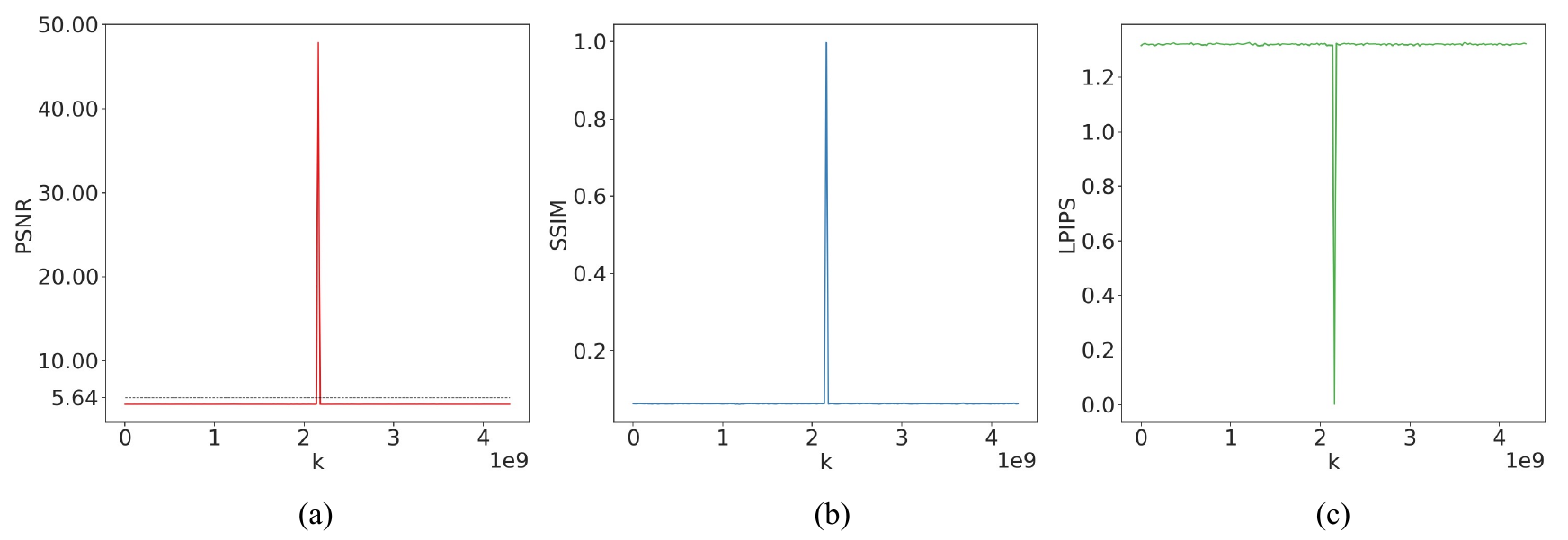}}
			\caption{Average PSNR, SSIM, and LPIPS between 100 pairs of plaintext images and recovered images under a brute-force attack. The plaintext images are randomly sampled from the VGGFace2 test dataset, and recovered images are decoded by using 200 different seeds uniformly sampled from the range $[0,2^{32}-1]$.}
			\label{fig:seedsen}
		\end{figure}
	\end{center}

\subsection{Cloud Attack}\label{sec:cattack}
We now consider an honest but curious cloud server that possesses the target classifier $\mathcal{C}$ and a set of historical ciphertexts. Harnessing these existing assets, the cloud server aims to recover plaintexts from the ciphertexts. We design a novel generative approach that leverages a discriminator to assist a generator in producing images as the recovered plaintexts. This discriminator can distinguish real images from synthesized counterparts. The images produced by this generator are expected to fulfill two criteria: 1) they need to be classified by $\mathcal{C}$ as belonging to the same class as the corresponding ciphertext and 2) they should be identified as real samples by the discriminator but not as synthesized samples. Additionally, we introduce a regularization term in the training loss of this generator to promote diversity in the generated content. More details on the loss function and training procedure are presented in Appendix \ref{app2}. Next, we examine the reconstructed images from the VGGFace2 dataset by the generator trained by this method.

We randomly sample 500 ciphertexts encrypted by the RIC for the testing purposes. The corresponding face images recovered by the cloud attack are displayed in the third row of Fig.\ref{fig:threatattack}. It is evident from the visualizations that the well-trained generator struggles to recover recognizable human face images from the ciphertexts. This highlights the difficulty in achieving meaningful recovery through the cloud attack.

  \begin{center}
		\begin{figure}[t]	\centering{\includegraphics[width=1\textwidth]{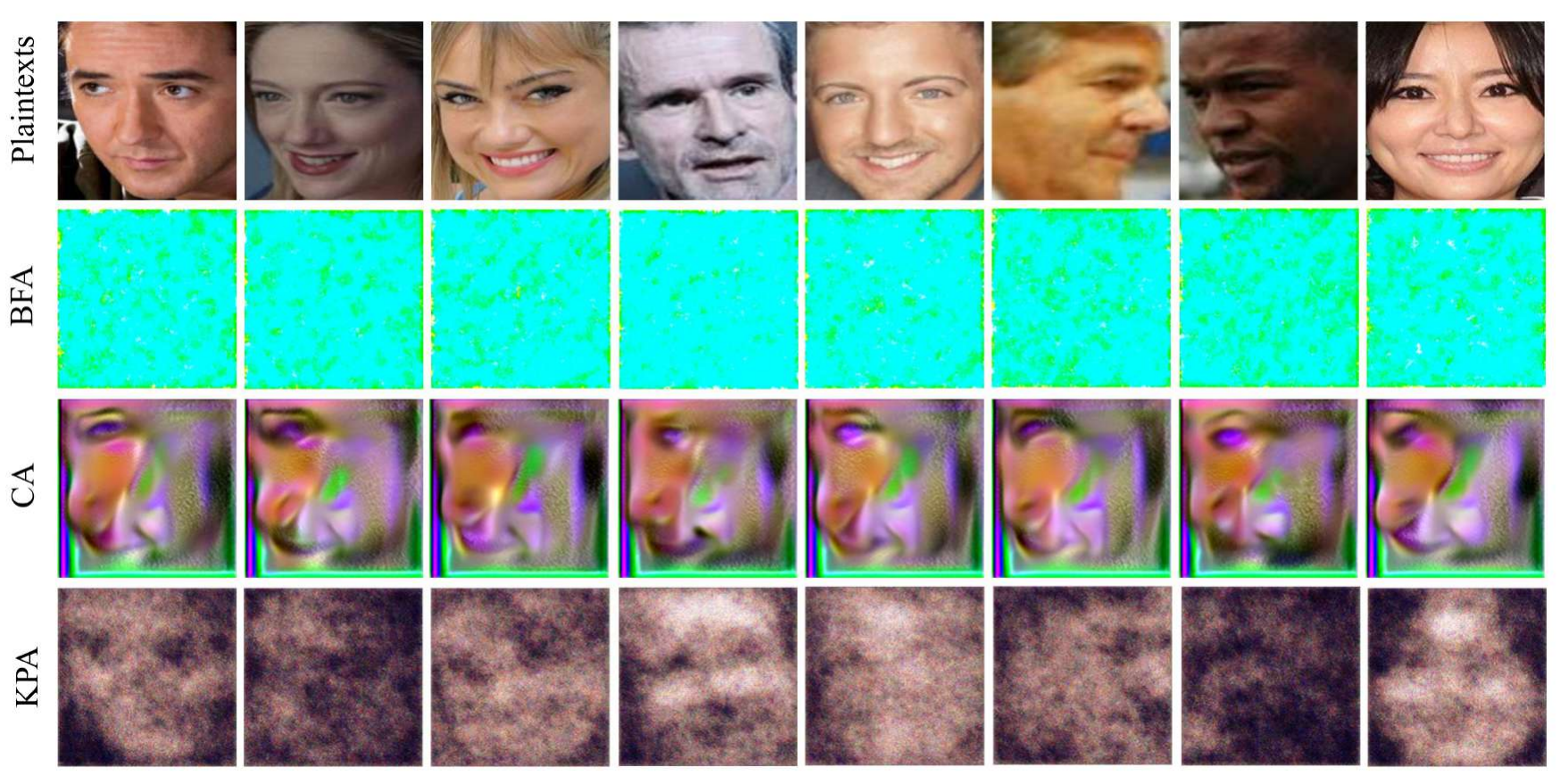}}
			\caption{Visualization of plaintexts from the VGGFace2 dataset and images recovered by a brute-force attack (BFA), cloud attack (CA) and known-plaintext attack (KPA).}
			\label{fig:threatattack}
		\end{figure}
	\end{center}
\subsection{Known-plaintext Attack} 
A more powerful attack is the known-plaintext attack, where adversaries have access to pairs of past ciphertexts and corresponding plaintext images. This scenario can occur in our system when adversaries are previously authorized users and have obtained some images that have been successfully recovered with the correct key. In such a scenario, adversaries can form a set of $N$ pairs of ciphertexts and corresponding correctly recovered plaintexts images as $\mathbb{A}=\left\{(\mathbf{x}^\prime_{i},\Tilde{\mathbf{x}_i})\mid 0\leq i\leq N, i, N \in\mathbb{Z}\right\}$. To recover the underlying mapping between the ciphertext and plaintext images, our approach is to train a network $\mathcal{M}$ by optimizing the following objective:

\begin{equation}\label{eq:reconatt} 
\min_{\theta_\mathcal{M}}\frac{1}{N}\sum_{i=1}^{N}||\mathcal{M}(\mathbf{x}^\prime_{i})-\Tilde{\mathbf{x}_i}||_2.
	\end{equation}Here, $\theta_\mathcal{M}$ represents the parameters of $\mathcal{M}$. In our experiment, we implement $\mathcal{M}$ using the U-Net\cite{ronneberger2015u} architecture, due to its excellent performance in low-shot learning scenarios, since adversaries in this attack context typically have access to only a small number of training samples. We train a U-Net model on the VGGFace2 dataset with $N=500$. The reconstructed images of randomly sampled ciphertexts from the test dataset are displayed in the fourth row of Fig.\ref{fig:threatattack}. It is shown that recognizable faces can hardly be reconstructed from the ciphertexts generated by our RIC.

\section{Conclusion}\label{sec:sec7}
In this paper, we have introduced a recoverable privacy-preserving classification system that enables a classifier pre-trained in the plaintext domain to work effectively in the ciphertext domain. This relieves the burden of re-training dedicated ciphertext-domain classifiers, which often lead to severely compromised classification accuracy. In addition, the produced ciphertext images can be faithfully decoded into plaintext images with negligible distortions, e.g., achieving 51+/48+dB PSNR on SVHN/VGGFace2 datasets. Extensive experiments have been provided to validate the superior performance of our proposed scheme in terms of ciphertext-domain classification accuracy, confidentiality, and generalizability.

\bibliographystyle{ACM-Reference-Format}
\bibliography{reference}

%\appendix[Proof of the Key Sensitivity]
\appendix
\section{Proof of Theorem \ref{th1}}\label{sec:lemmap}
%\begin{theorem}

% \begin{lemma}
% \end{lemma}

We first analyze the lower bound of the expectation of PSNR, denoted as $\frac{\alpha}{m}$ in (\ref{eq:prob}), by starting from the PSNR definition. It is well known that PSNR between images $\tilde{\mathbf{x}}^\prime$ and $\tilde{\mathbf{x}}$ can be expressed as: 
  	\begin{equation}\label{eq:psnr}
		\mathop{PSNR}(\tilde{\mathbf{x}}^\prime,\tilde{\mathbf{x}})=20\cdot\log_{10}\frac{\mathop{MAX_{I}}}{\sqrt{MSE(\tilde{\mathbf{x}}^\prime,\tilde{\mathbf{x}})}}.       
	\end{equation} 
Here, $\mathop{MAX_{I}}$ represents the maximum possible pixel value of images, which is 255 in our case. Also, the Mean Square Error (MSE) is calculated by:  
	  	\begin{equation}\label{eq:mse}	
			\mathop{MSE}(\tilde{\mathbf{x}}^\prime,\tilde{\mathbf{x}})=\frac{1}{d}\sum_{i=1}^{d}(s_i^\prime-s_i)^2,
	\end{equation}
 where $s_i$ and $s_i^\prime$ are the $i$th pixel values of $\tilde{\mathbf{x}}$ and $\tilde{\mathbf{x}}^\prime$, respectively. By scrutinizing the numerator and denominator of (\ref{eq:psnr}), we discern that $\mathop{PSNR}(\tilde{\mathbf{x}}^\prime,\tilde{\mathbf{x}})$ is predominantly determined by $s_i-s_i^\prime$. Consequently, to obtain the value of $\frac{\alpha}{m}$, we embark on an initial analysis of $s_i$ and $s_i^\prime$. To facilitate subsequent analytical discourse, we define a new function $\mathcal{H}_i(\mathbf{r}):\mathbf{r}\in\mathbb{R}^d\to s_i\in\mathbb{R}$ to represent the mapping from an RNI $\mathbf{r}$ to the $i$th pixel value of its associated recovered image. Without loss of generality, we have the following assumption:
	 
	 \textbf{Assumption 1}: The function $\mathcal{H}_i(\mathbf{r})$ is differentiable, allowing us to estimate its gradient with respect to the input $\mathbf{r}$. 
	 
	 According to the gradient theorem\cite{zill2009multivariable}, if $\mathcal{H}_i(\mathbf{r})$ is a differentiable mapping from $\mathbb{R}^d$ to $\mathbb{R}$, and $\gamma$ is a continuous curve in $\mathbb{R}^d$ that starts at a point $\mathbf{n}$ and ends at another point $\mathbf{n}^\prime$, then the integral of the gradient of $\mathcal{H}_i(\mathbf{r})$ along the curve $\gamma$ is equal to the difference in the values of $\mathcal{H}_i(\mathbf{r})$ evaluated at $\mathbf{n^\prime}$ and $\mathbf{n}$. Let $\gamma$ be parameterized by a function of $t$ as $\mathbf{r}(t)=t\cdot\mathbf{n}^\prime+(1-t)\cdot\mathbf{n}$, where $0
  \leq t\leq1$, and $\mathbf{n}^\prime\in\mathcal{B}(\mathbf{n},\epsilon)$. Here, RNIs $\mathbf{n}$ and $\mathbf{n}^\prime$ are generated by $\mathop{PRNG}(k,U)$ and $\mathop{PRNG}(k^\prime,U)$ respectively. Also, $\mathcal{B}$ represents a high-dimensional $l_\infty$ ball centered at $\mathbf{n}$, and $\epsilon\in\{1,2,...,255\}$ is the radius of the ball. Therefore, the difference between pixel values of the same $i$th dimensionality is no more than $\epsilon$, namely $|n_i^\prime-n_i|\leq\epsilon$. Consequently, by using the gradient theorem, we can obtain an upper bound for $s_i^\prime-s_i$, i.e. $\mathcal{H}_i(\mathbf{n^\prime})-
		\mathcal{H}_i(\mathbf{n})$, through the following derivation:

	\begin{equation}\label{eq:resbound}
		\begin{aligned}	
		\mathcal{H}_i(\mathbf{n^\prime})-
		\mathcal{H}_i(\mathbf{n})
		&= \int_{\gamma}\nabla_{\mathbf{r}}\mathcal{H}_i(\mathbf{r})\cdot d\mathbf{r}\\
			&= \int_{0}^{1}\nabla_{\mathbf{r}}\mathcal{H}_i(\mathbf{r})\cdot \nabla_t\mathbf{r}(t)\cdot dt\\
			&=\int_{0}^{1}||\nabla_{\mathbf{r}}\mathcal{H}_i(\mathbf{r})||_2 ||\nabla_t\mathbf{r}(t)||_2\cos(\nabla_{\mathbf{r}}\mathcal{H}_i(\mathbf{r}),\nabla_t\mathbf{r}(t))dt\\
			&\leq\int_{0}^{1}||\nabla_{\mathbf{r}}\mathcal{H}_i(\mathbf{r})||_2 ||\nabla_t\mathbf{r}(t)||_2dt\\
			&\leq\int_{0}^{1}||\nabla_{\mathbf{r}}\mathcal{H}_i(\mathbf{r})||_2 ||\mathbf{n}^\prime-\mathbf{n}||_2dt\\
			&\leq\int_{0}^{1}||\nabla_{\mathbf{r}}\mathcal{H}_i(\mathbf{r})||_2\cdot\sqrt{\sum_{i=1}^{d}(n_i^\prime-n_i)^2}\cdot dt.\\
   &\leq\epsilon\cdot\sqrt{d}\cdot(||\nabla_{\mathbf{n}^\prime}\mathcal{H}_i(\mathbf{n}^\prime)||_2-||\nabla_{\mathbf{n}}\mathcal{H}_i(\mathbf{n})||_2).
		\end{aligned}
	\end{equation}

%第二个小于等于是利用求导法则计算的，第三个小于等于是根据2范数的求法定义来的

  Afterwards, by combining (\ref{eq:psnr}), (\ref{eq:mse}) and (\ref{eq:resbound}), we obtain the following inequality:
	
 \begin{equation}\label{eq:psnr2}
		%\begin{align*}		
			\mathop{PSNR}(\tilde{\mathbf{x}}^\prime,\tilde{\mathbf{x}})\geq20\cdot\log_{10}\frac{255}{\epsilon\cdot\sqrt{\sum_{i=1}^{d}(||\nabla_{\mathbf{n}^\prime}\mathcal{H}_i(\mathbf{n}^\prime)||_2-||\nabla_{\mathbf{n}}\mathcal{H}_i(\mathbf{n})||_2)}}.
	\end{equation} 
 For simplicity, we represent the right-hand side term as $\alpha/\epsilon$. To further determine the value of $\alpha/\epsilon$, we utilize the NES method\cite{llyas2018blackattack} to estimate the gradient of $\mathcal{H}$ as follows:
  
	\begin{equation}\label{eq:nes}
		\begin{aligned}		
			\nabla_{\mathbf{r}}\mathcal{H}_i(\mathbf{r})\approx\frac{1}{\sigma z}\sum_{j=1}^{z}\mathbf{v}_j\mathcal{H}_i(\mathbf{r}+\sigma\mathbf{v}_j).
		\end{aligned}
	\end{equation}
 Here, the hyper-parameter $z$ represents the number of the vector $\mathbf{v}_j$ that are randomly sampled from a standard Gaussian distribution, and $\sigma$ is a small positive constant to control the estimation accuracy. Based on (\ref{eq:nes}), we can estimate the value of $\alpha$ for a specific dataset by sampling ciphertexts of this dataset and then generating their recovered images using $\mathbf{n}$ or $\mathbf{n}^\prime$. To this end, we are able to obtain the estimated expression of the left side in (\ref{eq:prob}). 

 Subsequently, to substantiate the validity of (\ref{eq:prob}), we examine its right-hand side, which represents the upper bound of the probability related to $\mathbb{E}_{p(\mathbf{x^\prime_q})}[\mathop{PSNR}(\tilde{\mathbf{x}}^\prime,\tilde{\mathbf{x}})]$. We extrapolate this probability by leveraging the likelihood of $\epsilon$'s values in (\ref{eq:psnr2}). Specifically, since $\epsilon$ is the $l_\infty$ norm of $\mathbf{n}^\prime-\mathbf{n}$, and the pixel values of $\mathbf{n}^\prime$ and $\mathbf{n}$ are within the set $\{1,2,...,255\}$, the probability that $\epsilon\leq m$, $m\in\{1,2,...,255\}$, can be inferred as follows:
	\begin{equation}\label{eq:epsilonp}
		\begin{aligned}			\prob{\epsilon\leq m}\leq\left(\frac{2m+1}{256}\right)^d.
		\end{aligned}
	\end{equation}
Eventually, according to (\ref{eq:psnr2}) and (\ref{eq:epsilonp}), we can finally derive: 

 \begin{equation}\label{eq:psnr5}
			\prob{\mathbb{E}_{p(\mathbf{x^\prime_q})}[\mathop{PSNR}(\tilde{\mathbf{x}}^\prime,\tilde{\mathbf{x}})]\geq\frac{\alpha}{m}}\leq\left(\frac{2m+1}{256}\right)^d.
	\end{equation}

\section{More details on the cloud attack}\label{app2}
In the cloud attack, the cloud server possesses the target classifier $\mathcal{C}$ and a set of ciphertexts $\mathbf{x^\prime_q}$, with predicted class label being $\hat{y}=\mathop{\arg\max}_{i}\mathcal{C}(\mathbf{x^\prime_q})[i]$. We design a new generative method that leverages a discerning discriminator $\mathbf{D}$ that can distinguish genuine images from synthesized counterparts. This discriminator guides a generator $\mathcal{G}$ in crafting an image $\hat{\mathbf{x}}=\mathcal{G}(\mathbf{E}(\mathbf{x^\prime_q}))$ that resembles a genuine one and be recognized as the same class label of $\mathbf{x^\prime_q}$ by $\mathcal{C}$. Here, $\mathbf{E}$ represents an encoder that produces a latent feature $\mathbf{w}$ from the ciphertext $\mathbf{x^\prime_q}$. The parameters $\theta_\mathcal{G}$ of the generator $\mathcal{G}$ and $\theta_\mathcal{E}$ of the encoder $\mathbf{E}$ are learned by optimizing the following objective: 
\begin{equation}\label{eq:cloudatt} 
\min_{\theta_\mathcal{G},\theta_\mathcal{E}}CE(\mathcal{C}(\hat{\mathbf{x}}),\hat{y})-\eta_1\cdot log(1+exp(\mathbf{D}(\hat{\mathbf{x}})))-\eta_2\cdot \frac{1}{d_w}\sum_{j=0}^{d_w-1}\sigma(\{\mathbf{w}^j\}_N),
	\end{equation}
 where $d_w$ represents the dimensionality of $\mathbf{w}$ and $\mathbf{w}^j$ denotes the value in the $j$th dimensionality of the latent feature ($\mathbf{w}=\mathbf{E}(\mathbf{x}^\prime_q)$).
In (\ref{eq:cloudatt}), the first term is the CE loss, meaning that the predicted class label of $\hat{\mathbf{x}}$ is expected to be the same as that of the ciphertext $\mathbf{x^\prime_q}$. The second term, with $\eta_1$ as the coefficient, guides the generated image $\hat{\mathbf{x}}$ to be perceived as a genuine image by a pre-trained discriminator $\mathbf{D}$. For example, as for face images, we adopt the officially pre-trained discriminator proposed by StyleALAE\cite{pidhorskyi2020adversarial}, which demonstrates good performance in generating face images and can effectively classify genuine or synthesized face images. The last term, devised by us, penalizes the average variance of each dimensionality in the latent space produced by $\mathbf{E}$. Such a regularization term helps promote the diversity of the generated content. The parameter $\sigma$ in this term is defined as:
 \begin{equation}\label{eq:cloudatt2} 
\sigma(\{\mathbf{w}^j\}_N)=\frac{\sum_{i=0}^{N-1}(\mathbf{w}_i^j-\frac{1}{N}\sum_{i=0}^{N-1}\mathbf{w}_i^j)^2}{N-1},
	\end{equation}
 where $\mathbf{w}_i^j$ denotes the value in the $j$th dimensionality of the $i$th latent feature ($\mathbf{w}_i=\mathbf{E}(\mathbf{x}^\prime_i)$).

We also build a training dataset that consists of 4500 ciphertexts randomly sampled from VGGFace2 and encrypted by our RIC. The optimization of (\ref{eq:cloudatt}) can then be conducted by using a customized Adam optimizer.

\end{document}